\documentclass{article} 
\usepackage{iclr2027_conference,times}

\usepackage{amsmath,amsfonts,bm}

\def\eqref#1{equation~\ref{#1}}

\def\1{\bm{1}}

\DeclareMathAlphabet{\mathsfit}{\encodingdefault}{\sfdefault}{m}{sl}
\SetMathAlphabet{\mathsfit}{bold}{\encodingdefault}{\sfdefault}{bx}{n}

\usepackage{hyperref}
\usepackage{url}
\usepackage{algorithm,algorithmic}
\usepackage{subcaption}
\usepackage{graphicx}
\usepackage{amsthm,amsmath}
\usepackage{thm-restate}
\usepackage{booktabs,multirow,xcolor}
\usepackage{enumitem}
\usepackage{pifont}
\usepackage{wrapfig}
\usepackage{cleveref}
\usepackage{makecell}
\usepackage{dsfont}
\usepackage[dvipsnames]{xcolor}  
\hypersetup{
    colorlinks=true, 
    linkcolor=Blue,
    filecolor=Blue,      
    urlcolor=Blue,
    citecolor=Blue 
    }

\newcommand{\cmark}{\textcolor{green!55!black}{\ding{51}}}
\newcommand{\xmark}{\textcolor{red!75!black}{\ding{55}}}

\title{
A Foundation Model for Policy In-Context Longitudinal Causal Effect Estimation}

\author{Wenxin Chen\thanks{This study was done during a summer internship at Microsoft Research}\\
Cornell University \\
New York City, NY\\
\And
Kyra Gan \\
Cornell University \\
New York City, NY \\
\And
Mingyu Lu \\
University of Washington \\
Seattle, WA \\
\And
Zilin Jing \\
Columbia University \\
New York City, NY \\
\And
Fei Wang \\
Cornell University \\
New York City, NY \\
\And
Hoifung Poon \\
Recursion Pharmaceuticals, Inc. \\
Salt Lake City, UT \\
\And
Tristan Naumann \& Peniel Argaw \\
Microsoft Research \\
Redmond, WA
}

\newcommand{\methodname}{PICLE-PFN}

\newcommand{\bpi}{\boldsymbol{\pi}}

\theoremstyle{plain}

\theoremstyle{definition}

\theoremstyle{remark}

\iclrfinalcopy 
\begin{document}

\maketitle
\lhead{Preprint} 

\begin{abstract}
Longitudinal causal effect estimation is complicated by time-varying confounding and treatment-confounder feedback loops. Existing methods generally require data-specific nuisance-model fitting, and many are tailored to prespecified treatment sequences or prediction horizons, leading to time-consuming computation at deployment. We introduce \emph{Policy In-Context Longitudinal causal Effect estimation with Prior-data Fitted Networks} (\textbf{\methodname{}}), the first foundation model for longitudinal causal effect estimation that supports \emph{dynamic} treatment policies. \methodname{} represents target policies through in-context history-decision demonstrations, providing a unified interface for a broad class of treatment policies. Pretrained over synthetic data sampled from longitudinal structural causal models, \methodname{} amortizes causal effect estimation across datasets, policies, and horizons. Given a new observational dataset and target policy, it directly predicts conditional average potential outcomes (CAPOs) in one forward pass without task-specific training or hyperparameter tuning.
Across synthetic and semi-synthetic longitudinal benchmarks, \methodname{} achieves competitive RMSE and substantially improved uncertainty calibration with a fraction of deployment runtime, compared to state-of-the-art conventional estimators trained and tuned separately for each dataset.

\end{abstract}

\section{Introduction}
\vspace{-5pt}

Clinical care often involves a sequence of treatment decisions that evolve with a patient’s changing condition~\citep{murphy2003optimal, chakraborty2013statistical}. Estimating the effects of these time-varying treatments is therefore essential for guiding effective care. Doing so from observational data is particularly challenging: covariates can simultaneously be consequences of earlier treatments and determinants of later treatment decisions and outcomes, creating treatment–confounder feedback that complicates causal effect estimation~\citep{robins1986new, hernan2010causal}.

Existing methods decompose the 
longitudinal
counterfactual target into a sequence of local nuisance regressions, which are fitted to each target dataset and combined through a causal estimation procedure~\citep{robins1986new,robins2000marginal,bang2005doubly}. 
This decomposition introduces two practical challenges. First, estimation errors can propagate and accumulate over the intervention horizon~\citep{chen2026deep}. 
Second, existing methods face a trade-off between flexibility across causal queries and estimation variance. Some estimators fit nuisance models that can be reused across many policies and horizons, but this flexibility can come at the cost of high variance, e.g., through products of inverse propensities or by modeling the full longitudinal data-generating process~\citep{li2021g,frauen2025model}. Other estimators reduce variance by fitting only the outcome regressions needed for a particular policy and horizon, but changing the query then requires retraining and potentially new hyperparameter tuning~\citep{hess2026igc,shirakawa2024longitudinal}.
Moreover, many existing methods~\citep{lim2018forecasting,bica2020estimating,melnychuk2022causal,seedat2022continuous}
are designed for static treatment sequences, which
apply prescribed fixed treatment decisions throughout the trajectory without considering time-varying covariates.
Yet clinically relevant counterfactual questions may involve treatment regimes that adapt decisions to time-varying patient characteristics.
There is a need for an estimator that supports both static and dynamic policies across different horizons, without retraining for each new target.

To address these challenges, we explore an alternative approach based on in-context learning (ICL). 
Recent tabular foundation models demonstrate that pretraining on synthetic tasks can produce models that adapt in context to new datasets and prediction problems without parameter updates
~\citep{hollmann2023tabpfn, hollmann2025accurate, qu2025tabicl, qu2026tabiclv}. 
This paradigm offers two opportunities for longitudinal causal estimation. 
First, in synthetic data, we can intervene on a target policy and directly generate its corresponding multi-step potential outcome. This provides ground-truth supervision for the final causal estimand, avoiding the need to learn and compose a sequence of local nuisance models.
Second, diverse pretraining priors can support adaptation to different datasets and causal queries. 
The remaining gap is representing target policies, including dynamic ones, as context for ICL,
and directly learning conditional average potential outcome (CAPO) across different horizons.

To meet these gaps, we introduce \methodname{}, the first foundation model for longitudinal CAPO estimation supporting both static and dynamic policies. Our contributions are three-fold: \textbf{(i) Policy representation and ICL formulation.} We show that a target policy's decisions on factual histories across the \emph{entire} observed dataset preserve the policy information needed for CAPO identification. This motivates an ICL formulation that combines observational data with policy demonstrations, providing a unified interface for static and dynamic policies over multi-step intervention windows. \textbf{(ii) Architecture and amortized estimation.} We design a novel transformer based on Prior-data Fitted Network (PFN). Direct pretraining on synthetic multi-step CAPO targets avoids full generative modeling and explicit nuisance-model composition at deployment. Pretraining across diverse datasets, policies, and horizons enables flexible causal queries in a single forward pass without task-specific optimization. \textbf{(iii) Empirical performance.} 
\methodname{} achieves the lowest RMSE in six of seven benchmark settings, with up to 12.2\% lower RMSE than the strongest baseline, while being about $ 100\times$ to $ 5000\times$ faster at deployment than a comparably accurate method. It also provides substantially better calibrated uncertainty.
Code is available at \url{https://anonymous.4open.science/r/PICLE-02FE/}.

\section{Related Work}
\vspace{-5pt}

\begin{table*}[t]
\centering
\caption{\textbf{Flexibility of existing methods:}
RMSN \citep{lim2018forecasting};
CRN \citep{bica2020estimating};
TE-CDE \citep{seedat2022continuous};
CT \citep{melnychuk2022causal};
G-Net \citep{li2021g};
IGC-Net \citep{hess2026igc};
DeepACE \citep{frauen2023estimating};
DeepLTMLE \citep{shirakawa2024longitudinal};
D$^3$-Net \citep{chen2026deep}.
\cmark\, indicates that the capability is supported by the method's
released implementation. 
}
\label{tab:method_flexibility}

\resizebox{\textwidth}{!}{%
\begin{tabular}{lcccccccccc}
\toprule
\textbf{Capability}
    & RMSN
    & CRN
    & TE-CDE
    & CT
    & G-Net
    & IGC-Net
    & DeepACE
    & DeepLTMLE
    & D$^3$-Net
    & \textbf{\methodname{}} \\
\midrule

Dynamic treatment policy
    & \xmark & \xmark & \xmark & \xmark & \xmark
    & \cmark & \xmark & \cmark & \cmark & \cmark \\

Multidimensional treatment
    & \cmark & \cmark & \cmark & \cmark & \cmark
    & \cmark & \xmark & \xmark & \xmark & \cmark \\

No retraining across horizons/policies
    & \cmark & \cmark & \cmark & \cmark & \cmark
    & \xmark & \xmark & \xmark & \xmark & \cmark \\

\bottomrule
\end{tabular}%
}
\vspace{-1em}
\end{table*}

\paragraph{Tabular Foundation Model for Time Series and Causal Inference}
Recent work adopts the PFN approach for time-series forecasting. 
They either repurpose a pretrained non-longitudinal tabular model by treating time as additional features~\citep{hoo2026from} or explicit model temporal structure~\citep{dooley2023forecastpfn,taga2025timepfn,potapczynski2026timeaware}. 
Our model follows the latter approach, building an architecture to respect the temporal ordering of trajectories.
A separate line of work applies PFN to causal inference~\citep{balazadeh2025causalpfn,robertson2026pfn,ma2026foundation}. Yet they only focus on the static, single-treatment-assignment setting. 
In contrast, we consider the longitudinal setting with sequential treatment assignments and time-varying confounders\footnote{Concurrent work by \cite{zare2026causal} approach this longitudinal setting using a one-step mean rollout that does not generally recover CAPO, and only supports static treatment sequences; see Appendix~\ref{appx:causallongpfn}.}.  

\paragraph{Longitudinal Causal Effect Estimation.} Longitudinal causal estimators address time-varying confounding through modeling treatment, outcome, or both. Treatment-based methods use inverse probability weighting or balanced representations \citep{robins2000marginal,bica2020estimating,seedat2022continuous,lim2018forecasting,melnychuk2022causal}, but can suffer from unstable weights under limited overlap \citep{cole2008constructing,petersen2012diagnosing}. Outcome-based methods use the G-formula, either by simulating covariate trajectories~\citep{li2021g} or recursively regressing outcomes backward in time \citep{mcgrath2020gformula,frauen2023estimating,oprescu2025gst,hess2026igc}. Doubly robust methods combine treatment and outcome models to reduce sensitivity to misspecification \citep{van2011targeted,frauen2025model,shirakawa2024longitudinal,chen2026deep,chen2026smooth}. 

Despite these advances, existing methods show limited flexibility to handle various causal queries. 
Many focus on prespecified treatment sequences that do not adapt to patients' evolving status. Some require retraining or potentially retuning when changing the target policy or horizon. 
Recent DR-based methods also commonly assume a single-dimension treatment, with extensions to multidimensional treatment spaces requiring nontrivial changes to their models and objectives. 
In contrast, \methodname{}
provides all these capabilities
within a single pretrained model
(Table~\ref{tab:method_flexibility}).

\section{Problem Setup and In-Context Formulation}
\vspace{-5pt}

We consider $n$ independent and identically distributed (i.i.d.) trajectories observed over discrete time steps $t \in \{1, \dots, T\}$. For each trajectory, we observe time-invariant baseline covariates $B \in \mathbb{R}^{d_B}$, time-varying covariates $L_t \in \mathbb{R}^{d_L}$, and a $d_A$-dimensional binary treatments $A_t \in \{0, 1\}^{d_A}$. Let $Y_{t+1} \in \mathbb{R}$ denote the outcome measured following treatment assignment $A_t$. 
An observed trajectory $O
$ is thus represented as:
$
O = (B, L_1, A_1, Y_2, L_2, A_2, Y_3, \dots, L_T, A_T, Y_{T+1})\sim P_0,
$
where $P_0$ is the unknown true data-generating distribution.

For any sequence of random variables $X_t$, let $X_{s:e} = (X_s, X_{s+1}, \dots, X_e)$ denote the subsequence from time $s$ to $e$, with $X_{s:e}$ empty when $s>e$. 
The observed history immediately preceding treatment assignment at time $t$ is denoted as
$
H_t = (B, L_{1:t}, A_{1:t-1}, Y_{2:t}),
$ 
and the full trajectory up to time $t$ as $
O_{t} = (B, L_{1:t}, A_{1:t}, Y_{2:t+1}).
$


Under $P_0$, treatments follow an unknown behavior policy $\pi^*=(\pi_1^*,\ldots,\pi_T^*)$, where $\pi_t^*(a\mid h_t):=\Pr(A_t=a\mid H_t=h_t)$ is the probability of assigning treatment $a$ given history $h_t$.

\paragraph{Target Policies and Target Estimand.}
We consider deterministic target policies $\pi_{s:e} = (\pi_s, \dots, \pi_e)$, where $\pi_t:\mathcal H_t\to\{0,1\}^{d_A}$.
This includes both static treatment sequences and
history-dependent dynamic policies.
Denote $a^{\pi_{s:e}}_{s:e}$ as the treatment sequences that would have been assigned if we apply target policy $\pi_{s:e}$ starting from $H_s$, and denote $L_{e+1}^{\pi_{s:e}}$ and $Y_{e+1}^{\pi_{s:e}}$ as the potential covariates and outcome at time $e+1$, respectively, under $\pi_{s:e}$.

Given a history $h_s\in\mathcal H_s$, our target estimand is \textbf{conditional average potential outcome (CAPO)},
\[
\psi_0^{\pi_{s:e}}(h_s)
:=
\mathbb E\!\left[
Y_{e+1}^{\pi_{s:e}}\mid H_s=h_s
\right].
\]
Intuitively, $\psi_0^{\pi_{s:e}}(h_s)$ represents the true expected future outcome at time $e+1$ conditional on observed history $h_s$
and treatment
policy $\pi_{s:e}$ over the horizon $s$ to $e$. 

\paragraph{Identification Assumptions.}
We adopt the following standard assumptions~\citep{shirakawa2024longitudinal,wen2021parametric}:

\begin{enumerate}[label=(\alph*), itemsep=0pt, topsep=2pt, partopsep=0pt, parsep=0pt, leftmargin=20pt]
    \item Consistency. Observed covariates/outcome under realized treatments equal the potential covariates/outcome under that same treatments: $A_{s:e} = a^{\pi_{s:e}}_{s:e} \implies L_{e+1}^{\pi_{s:e}} = L_{e+1},\, Y_{e+1}^{\pi_{s:e}} = Y_{e+1}$. 
    
    \item Sequential Ignorability. For every $t\in\{s,\dots,e\}$
        and target policy sequence $\pi_{t:e}$ under consideration,
        $
        (Y^{\pi_{t:t}}_{t+1},\dots,Y^{\pi_{t:e}}_{e+1})
        \perp A_t \mid H_t.
        $
    \item Positivity. For histories $h_t \in \mathcal{H}_t$ and $a = \pi_t(h_t)$:
    $
    \pi_t^*(a \mid h_t) > 0.
    $
\end{enumerate}

With these assumptions, a target policy's CAPO is identified. We next show that a policy can be represented by its decisions on observed histories without losing information needed for identification.


\subsection{In-context Formulation of Longitudinal Causal Effect Estimation}

Target policies can take 
various forms, ranging from static treatment sequences
to complex, black-box decision rules. 
Parameterizing their underlying mechanisms as model inputs would necessitate a distinct representation for each policy family. To circumvent this, we represent a policy through its decisions on the dataset's factual histories rather than its underlying rules or parameters. We refer to the set of history-decision pairs as a \textbf{target policy demonstration}, which serves as a universal interface for any \emph{deterministic policy}. 
We first show that, under the standard assumptions, the target policy demonstrations
suffice to identify the CAPO (proof in Appendix~\ref{proof:policy_demo_sufficiency}).

\begin{restatable}[Sufficiency of Factual History Policy Demonstrations]{proposition}{PropPolicyDemoSufficiency}
\label{prop:policy_demo_sufficiency}

Suppose consistency, sequential ignorability, and positivity hold. For every $t\in\{s,\dots,e\}$, construct a population demonstration of a deterministic policy according to
$$
H_t\sim P_0,
\qquad
A_t^{\pi_t}\sim\pi_t(H_t).
$$
Then the collection of population policy demonstration distributions of
$(H_t,A_t^{\pi_t})$, together with $P_0$, identifies the corresponding target CAPO,
$
\psi_0^{\pi_{s:e}}(h_s),
$
for $P_{0}$-almost every $h_s$.
\end{restatable}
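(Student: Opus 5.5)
The plan is to go through the standard longitudinal G-formula, written as iterated conditional expectations. Then check that every ingredient of that formula is a functional either of $P_0$ or of the demonstration distributions of $(H_t,A_t^{\pi_t})$.

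\textbf{Step 1: the identification formula.} Under consistency, sequential ignorability and positivity, I will establish by backward induction the iterated-regression representation. Set $Q_{e+1}:=Y_{e+1}$. For $t=e,e-1,\dots,s$, define
\[
Q_t(h_t):=\mathbb E\!\left[\,Q_{t+1}(H_{t+1})\mid H_t=h_t,\;A_t=\pi_t(h_t)\right],
\]
where $Q_{t+1}(H_{t+1})$ is read as $Y_{e+1}$ when $t=e$. The claim is $\psi_0^{\pi_{s:e}}(h_s)=Q_s(h_s)$ for $P_0$-a.e.\ $h_s$.

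The inductive step conditions on $H_t$ and then uses three facts:
\begin{itemize}[itemsep=0pt,topsep=2pt]
\item ignorability at time $t$, to insert the event $A_t=\pi_t(H_t)$;
\item consistency, to replace potential quantities at $t+1$ by observed ones on that event;
\item the tower property, to pass to $H_{t+1}$ and apply the induction hypothesis $\mathbb E[Y^{\pi_{t+1:e}}_{e+1}\mid H_{t+1}]=Q_{t+1}(H_{t+1})$.
\end{itemize}
Positivity guarantees that the conditioning event $\{H_t=h_t,A_t=\pi_t(h_t)\}$ has positive conditional probability. The conditional expectations are therefore well defined $P_0$-a.e., with no extrapolation outside the support of $P_0$.

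\textbf{Step 2: reading off $\pi_t$ from the demonstrations.} The formula involves the target policy only through the map $h_t\mapsto\pi_t(h_t)$, evaluated on histories in the support of $P_0$'s marginal of $H_t$. Since $\pi_t$ is deterministic, the demonstration distribution of $(H_t,A_t^{\pi_t})$ is the pushforward of $P_0(H_t)$ under $h\mapsto(h,\pi_t(h))$. Its conditional law of $A^{\pi_t}_t$ given $H_t=h_t$ is therefore a point mass at $\pi_t(h_t)$ for $P_0$-a.e.\ $h_t$. By uniqueness of regular conditional distributions (up to null sets), the demonstration distribution determines $\pi_t$ up to a $P_0(H_t)$-null set.

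\textbf{Step 3: stability under null modifications.} I must show that modifying $\pi_t$ on a $P_0(H_t)$-null set does not change $Q_s$ $P_0$-a.e. This is the step I expect to be the main obstacle, because the recursion evaluates $Q_{t+1}$ on $H_{t+1}$ drawn under the conditional law given $A_t=\pi_t(H_t)$. I therefore need that law not to charge sets that are null for the marginal of $H_{t+1}$.

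I will argue this using positivity. Conditioning on $A_t=\pi_t(h_t)$, an event of positive probability given $h_t$, yields a law of $H_{t+1}$ that is absolutely continuous with respect to the unconditional law of $H_{t+1}$ given $H_t=h_t$. Integrating over $h_t$ shows the induced law is dominated by the $P_0$-marginal of $H_{t+1}$. Null-set changes of $\pi_{t+1}$ then propagate only to null sets. An induction from $t=e$ down to $s$ shows that any two policies agreeing with the demonstrations a.e.\ produce the same $Q_s$ $P_0$-a.e.

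The remaining conditional expectations in the recursion are functionals of $P_0$. Combining Steps 1–3 identifies $\psi_0^{\pi_{s:e}}(h_s)$ from $P_0$ together with the demonstration distributions.
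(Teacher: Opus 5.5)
Your proposal is correct and follows essentially the same route as the paper: recover $\pi_t$ $P_0$-a.e.\ from the demonstration distribution, write the CAPO via the iterated-conditional-expectation g-formula, and show by backward induction that two policies agreeing a.e.\ yield the same value. In that induction, positivity keeps the conditional law of $H_{t+1}$ given $A_t=\pi_t(H_t)$ from charging $P_0$-null sets. The only difference is that you sketch the derivation of the g-formula yourself, whereas the paper simply cites it.
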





Proposition~\ref{prop:policy_demo_sufficiency} establishes sufficiency at the population level. In practice, we approximate the population demonstrations with the empirical dataset by applying the policy to the observed histories:
$$
D_{n}^{\pi_{s:e}}
=
\left\{
(H_{i,t},A_{i,t}^{\pi_t})
\right\}_{i=1:n,\ t=s:e}
\quad
\text{where } A_{i,t}^{\pi_t}\sim\pi_t(H_{i,t}).
$$

We note that the policy information resides in the conditional relationship between histories and treatment decisions \emph{across the entire observed dataset}. Any single history-treatment pair generally does not specify this relationship. An estimator must therefore aggregate information across demonstrations to estimate the CAPO. This naturally motivates an ICL formulation: the observed dataset plus policy demonstrations serves as context for inferring the relevant policy behavior.



Besides the policy demonstrations, an estimator also needs to learn the data distribution. Under the ICL formulation, we can provide the observed trajectories $D_{n,e}^{\mathrm{obs}}=\{O_{i,e}\}_{i=1}^n$ as additional context. 
Because each observed trajectory $O_{i,e}$ already contains the patient's factual history, the corresponding policy demonstration for that patient is the sequence of target actions $(A_{i,t}^{\pi_t})_{t=s}^{e}$, prescribed for that specific history. This allows us to combine the two contexts efficiently without duplicating the history data, yielding the joint input: $ 
\left\{
\left(
O_{i,e},
(A_{i,t}^{\pi_t})_{t=s}^{e}
\right)
\right\}_{i=1}^{n}.
$ We feed this combined context, along with a new query history $h_s$, into an amortized estimator $\mathcal F_\theta$:
$$
\hat{\psi}_{n}^{\pi_{s:e}}(h_s)
:=
\mathcal F_\theta
\left(
D_{n,e}^{\mathrm{obs}},
D_n^{\pi_{s:e}},
h_s
\right) = 
\mathcal F_\theta
\left(\{
(
O_{i,e},
(A_{i,t}^{\pi_t})_{t=s}^{e}
)
\}_{i=1}^{n}, h_s
\right),
$$
where $\mathcal F_\theta$ is a shared parameterized predictor trained to estimate $\psi_0^{\pi_{s:e}}(h_s)$ across datasets and policies. Once trained, it estimates CAPO for new datasets, target policies, and query histories through ICL, without dataset- or policy-specific parameter updates.

\noindent \textbf{Supervision for Multi-step Counterfactual Targets.} The formulation above specifies the information available at inference time, but not how to supervise the estimator. Although the target CAPO is identified, it is not directly observed as a multi-step training label. For one-step outcome regression, the observed outcome provides a direct supervision signal: regressing $Y_{t+1}$ on $(H_t,A_t)$ estimates the conditional mean needed to evaluate a target policy. Over multiple steps, however, the observed endpoint reflects the subsequent treatments assigned under the behavior policy, rather than the target policy. It therefore does not generally provide a valid regression label for the multi-step CAPO under a target policy. Conventional G-computation methods bridge this gap by fitting local nuisance models and composing their predictions across time~\citep{li2021g,shirakawa2024longitudinal}.

Our ICL estimator bypasses explicit nuisance-model fitting and directly maps observational and policy contexts to CAPO, requiring a different source of supervision.
We therefore pretrain on synthetic datasets, from which we can generate both the observed data and corresponding CAPO targets through simulation under the target policy. 
The combination of ICL and synthetic supervision makes the PFN paradigm a natural fit. 

\section{Method}
\begin{algorithm}[t]
\caption{Longitudinal Causal PFN Pre-Training}
\label{alg:longitudinal_causal_pfn_prior}
\begin{algorithmic}[1]
\STATE \textbf{Input:} Prior over longitudinal SCMs $\Pi$ and target policies $\Pi_{\mathrm{policy}}$

\STATE Sample a longitudinal SCM
$
\mathcal{M} \sim \Pi,
$
which includes the behavior policy $\bpi^*$

\STATE Sample a target counterfactual policy:
$
\bpi \sim \Pi_{\mathrm{policy}}.
$

\FOR{$i=1,\dots,n$}
    \STATE Generate an observed trajectory, $O_i$, of length $T$ under the sampled SCM and behavior policy.

    \FOR{$(s,e)\in\mathcal K$}
        \STATE Obtain target policy demonstration by evaluating $\pi_t$ on the observed histories at each time step:
        $A_{i,t}^{\pi_t} = \pi_t(H_{i,t}), \;\text{for } t=s,\dots,e.$
        \STATE Simulate true counterfactual outcome: Branch a counterfactual rollout from $H_{i,s}$ by applying $\pi^{s:e}$ on the evolving counterfactual history, which yields the label $Y^{\pi_{s:e}}_{i,e+1}$.
    \ENDFOR

\ENDFOR

\STATE Partition the trajectory indices into disjoint context and
query sets



\STATE Minimize the negative log-likelihood over intervention windows and query trajectories. 
(Eq.\ref{eq:objective})

\end{algorithmic}
\end{algorithm}

We introduce \methodname{}, a PFN-style model pretrained on tasks sampled from a prior over longitudinal structural causal models and target policies. The prior (\S~\ref{sec:prior}) supplies diverse datasets, causal queries, and counterfactual labels; the training objective (\S~\ref{sec:objective}) enables direct CAPO estimation without fully generative modeling (summarized in Algorithm~\ref{alg:longitudinal_causal_pfn_prior}). The architecture (\S~\ref{sec:architecture}) integrates observational data and policy demonstrations to jointly predict across intervention windows.

\subsection{Prior Sampling}
\label{sec:prior}
We generate synthetic training tasks by sampling longitudinal SCMs and target policies. Each SCM varies in its dimensions, graph skeleton, and transition mechanisms, capturing nonlinear dynamics, heterogeneous treatment effects, and diverse behavioural policies. We construct the graph by sampling parent variables and their time lags while enforcing acyclicity. Target policies include both static treatment sequences and dynamic rules. For each task, we generate factual trajectories and counterfactual rollouts under the target policy; the latter provide supervision labels whose conditional mean is the CAPO. Appendix~\ref{appx:prior} details the sampling procedure.

\subsection{Training Objective}
\label{sec:objective}

We train \methodname{} to estimate CAPOs through the predictive means of counterfactual outcome distributions, jointly across query trajectories and intervention windows. For start time $s$ and end time $e$, let $\tau=e-s+1$ denote the horizon. The set of admissible windows is
\[
\mathcal K=\{(s,e):1\le s\le e\le T,\ e-s+1\le\tau_{\max}\}.
\]
Let $[n]=\{1,\ldots,n\}$ index the trajectories in a training task.
We partition these indices into disjoint context and query sets
$\mathcal I_{\mathrm{ctx}}$ and $\mathcal I_{\mathrm{qry}}$, with
$\mathcal I_{\mathrm{ctx}}\cup\mathcal I_{\mathrm{qry}}=[n]$, $\mathcal I_{\mathrm{ctx}}\cap\mathcal I_{\mathrm{qry}}=\emptyset$.
For each $(s,e)\in\mathcal K$, the inputs are
$$
D_{s:e}^{\mathrm{ctx}}
=
\left\{
\left(O_{i,e},\{A_{i,t}^{\pi_t}\}_{t=s}^{e}\right):
i\in\mathcal I_{\mathrm{ctx}}
\right\},
\qquad
D_{s:e}^{\mathrm{qry}}
=
\left\{
(H_{i,s},A_{i,s}^{\pi_s}):
i\in\mathcal I_{\mathrm{qry}}
\right\}.
$$
The context provides factual trajectories through the endpoint and policy demonstrations throughout the intervention window. Each query provides only its pre-intervention history and initial policy demonstration; information about subsequent policies is supplied by the context demonstrations.

For each query and window, we branch a counterfactual rollout from $H_{i,s}$ and apply the target policy through step $e$, obtaining a realized potential outcome $Y_{i,e+1}^{\pi_{s:e}}$. 
For a fixed SCM and target policy, this outcome has conditional expectation equal to the CAPO given $H_{i,s}$. We therefore use single-draw outcomes as a computationally efficient surrogate for direct CAPO supervision, avoiding the repeated rollouts required to approximate conditional means explicitly. We minimize the outcome negative log-likelihood:
\begin{equation}\label{eq:objective}
    \mathcal L(\theta)
=
\mathbb E\!\left[
-\frac{1}{|\mathcal I_{\mathrm{qry}}|\,|\mathcal K|}
\sum_{i\in\mathcal I_{\mathrm{qry}}}
\sum_{(s,e)\in\mathcal K}
\log q_\theta\!\left(
Y_{i,e+1}^{\pi_{s:e}}
\mid H_{i,s},A_{i,s}^{\pi_s},D_{s:e}^{\mathrm{ctx}}
\right)
\right],
\end{equation}
where the expectation is over sampled tasks. At inference, the predictive mean provides the CAPO estimate. Because the model is trained on realized counterfactual outcomes, its predictive spread also includes outcome variability.

\begin{figure}
    \centering
    \includegraphics[width=0.72\linewidth]{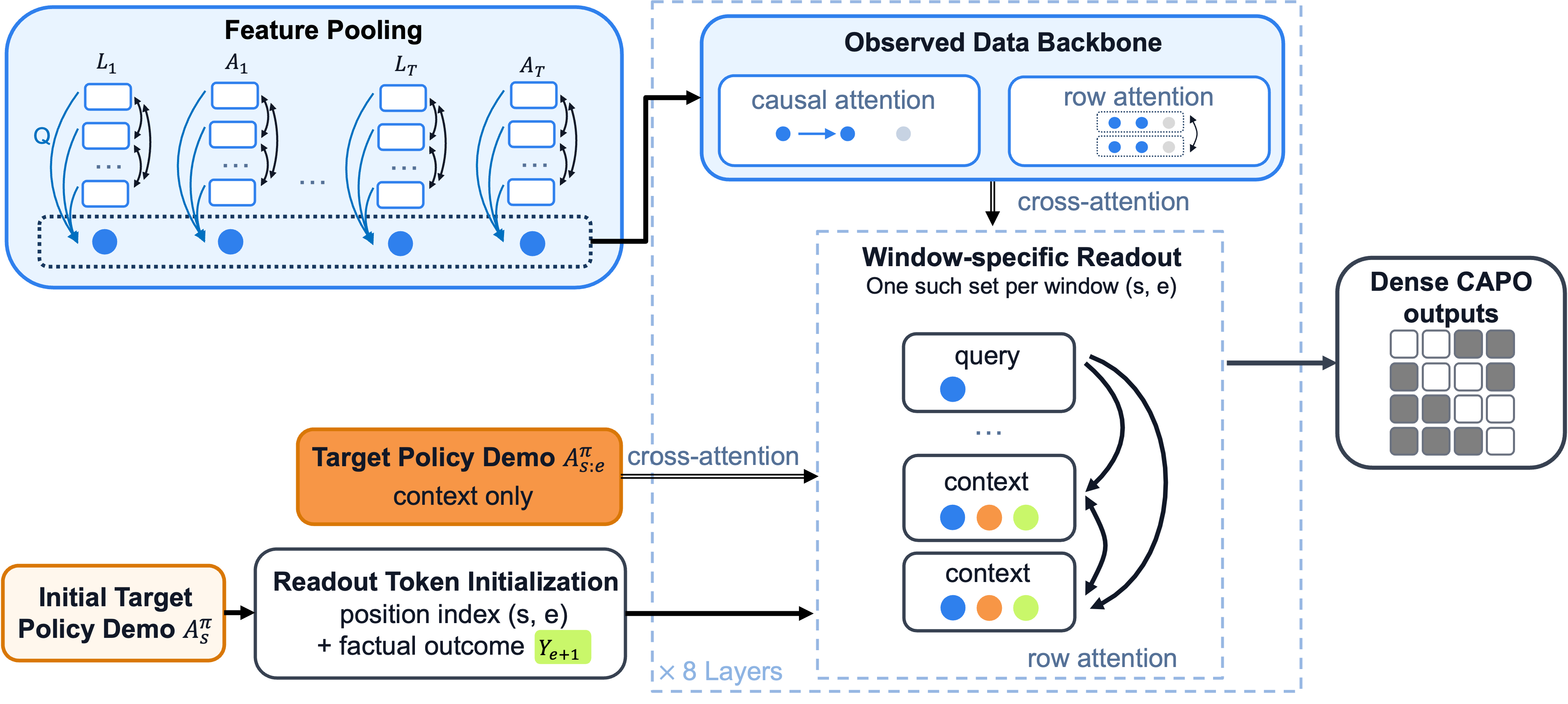}
    \caption{\textbf{PICLE-PFN Architecture}. Feature pooling encodes longitudinal covariates and treatments into embeddings processed by a shared backbone with causal temporal attention and row-attention. Window-specific readouts combine window indices, initial policy demonstrations, and factual outcomes (masked for queries). They cross-attend to trajectory-specific backbone representations under window-specific masks: query readouts access only pre-intervention history, while context readouts access factual data through $e$ and additionally attend to target policy demonstrations. Row-attention uses only context keys and values and operates separately within each readout window. Query readouts yield CAPO predictions across admissible windows in one forward pass.}
    \label{fig:architecture}
    \vspace{-1em}
\end{figure}

\subsection{Model Architecture}
\label{sec:architecture}
We parametrize $\mathcal{F}_{\theta}$ using a PFN-style transformer~\citep{hollmann2025accurate,qu2025tabicl} with a shared \emph{observed-data backbone} and \emph{window-specific readout tokens}. Each trajectory has one readout token per intervention window (Figure~\ref{fig:architecture}). Each transformer layer first updates the observed-data representations through causal attention across time steps and row attention across trajectories, then updates the readout tokens. Each readout cross-attends to its own trajectory under a window-specific mask: for window $(s,e)$, query readouts access only the factual history $H_s$, while context readouts access the factual trajectory through the target endpoint, including $Y_{e+1}$. The context readouts additionally incorporate target-policy demonstrations over $s{:}e$. Readout tokens then attend across trajectories, using only context readouts as keys and values, to enable in-context learning. Finally, the query readouts predict CAPO. Readout tokens share model parameters across windows, producing predictions for all admissible windows in a single forward pass. Full details appear in Appendix~\ref{appx:architecture}.

\section{Experiments}
\vspace{-5pt}
We first describe the datasets, baselines, and evaluation protocol (\S~\ref{sec:setup}). We then report CAPO estimation performance, a sanity check on real-world data prediction, and additional analyses on uncertainty calibration and ablation studies (\S~\ref{sec:results}). 
\vspace{-5pt}
\subsection{Experimental Setup}
\vspace{-5pt}
\label{sec:setup}

\begin{wraptable}{r}{0.38\textwidth}
\centering
\small
\setlength{\tabcolsep}{3pt}
\caption{Dataset characteristics. }
\label{tab:benchmark_settings}
\resizebox{\linewidth}{!}{%
\begin{tabular}{@{}lcc@{}}
\toprule
Dataset & $(d_A,d_L,d_B)$ & Policy type\\
\midrule
\texttt{cancer}          & $(2,1,1)$  & Static \\
\texttt{mimic\_y1}       & $(3,27,3)$ & Static \\
\texttt{mimic\_y2}       & $(3,27,3)$ & Static \\
\texttt{mimic\_1t}       & $(1,27,3)$ & Static \\
\texttt{mimic\_1t\_hdp0.5} & $(1,27,3)$ & Dynamic \\
\texttt{mimic\_1t\_hdp1} & $(1,27,3)$ & Dynamic \\
\texttt{mimic\_1t\_hdp2} & $(1,27,3)$ & Dynamic \\
\bottomrule
\multicolumn{3}{c}{\tiny Larger HDP values indicate greater deviation from the behavioral policy.}
\end{tabular}}
\end{wraptable}

\paragraph{Datasets \& Metrics.}  We use seven dataset settings, where the range of settings spans varying dimensions of treatments ($d_A$) and covariates ($d_L,d_B$), as well as static and dynamic treatment policies (\Cref{tab:benchmark_settings}). 
In the three \texttt{mimic\_1t\_hdp*} settings, their target policy functions have the same functional form as the behavioral policy, but have a scalar perturbation term that controls the deviation from the behavioral policy. The detailed data-generating process can be found in Appendix~\ref{appx:benchmark_data}.
Each setting synthesizes 20 datasets with 1000 trajectories in the training (context) set, and 1000 in the test (query) set. We estimate the test set CAPO on five different horizons, $\tau \in \{2,4,6,8,10\}$. We report RMSE, normalized by the standard deviation of observed outcomes.
We additionally evaluate \methodname{} on a real-world prediction task as a sanity check, described in \S~\ref{sec:results} and Appendix~\ref{appx:mimic_real}.

\noindent \textbf{Baselines.} We compare against nine baselines.
Given their flexibility to handle different settings, not all of them are evaluated in all baselines: (1) \textbf{DeepACE}~\citep{frauen2023estimating}, \textbf{DeepLTMLE}~\citep{shirakawa2024longitudinal}, and \textbf{D3-Net}~\citep{chen2026deep} can only handle single-dimensional treatment, so they are excluded from \texttt{cancer}, \texttt{mimic\_y1}, and \texttt{mimic\_y2}; (2) \textbf{CRN}~\citep{bica2020estimating}, \textbf{RMSN}~\citep{lim2018forecasting}, \textbf{TE-CDE}~\citep{seedat2022continuous}, \textbf{Causal Transformer (CT)}~\citep{melnychuk2022causal}, and \textbf{G-Net}~\citep{li2021g} don't handle dynamic treatment policies, so they are excluded from \texttt{mimic\_hdp0.5/1/2}; (3) \textbf{IGC-Net} is evaluated on all settings.
For every baseline, we tune its hyperparameters (HPs) with Optuna for
2 hours.The HP search spaces are listed in Table~\ref{tab:hp_grid}.
Further details can be found in Appendix~\ref{appx:baseline_hparams}.
\begin{table*}[t]
\centering
\caption{\textbf{Longitudinal CAPO benchmark across datasets.} Each cell reports the mean$\pm$SD normalized RMSE over 20 runs, with RMSE averaged across horizons within each repeat. \textbf{Bold} and \underline{underlined} values denote the best and second-best methods in each column, respectively.}
\label{tab:rmse_benchmark}
\resizebox{0.9\textwidth}{!}{%
\begin{tabular}{@{}lccccccc@{}}
\toprule
\multirow{2}{*}{Method} & \multirow{2}{*}{Cancer} & \multicolumn{6}{c}{MIMIC} \\
\cmidrule(lr){3-8}
 &  & Y1 & Y2 & 1T & 1T-HDP0.5 & 1T-HDP1 & 1T-HDP2 \\
\midrule
\textbf{PICLE-PFN} & \textbf{0.190$\pm$0.03} & 0.958$\pm$0.50 & \textbf{0.603$\pm$0.23} & \textbf{0.583$\pm$0.20} & \textbf{0.527$\pm$0.13} & \textbf{0.534$\pm$0.13} & \textbf{0.555$\pm$0.15} \\
IGC-Net & 0.297$\pm$0.02 & \underline{0.901$\pm$0.49} & \underline{0.604$\pm$0.18} & 0.699$\pm$0.11 & \underline{0.546$\pm$0.13} & \underline{0.580$\pm$0.13} & \underline{0.622$\pm$0.17} \\
CT & \underline{0.217$\pm$0.06} & \textbf{0.851$\pm$0.29} & 0.886$\pm$0.82 & 0.660$\pm$0.40 & --- & --- & --- \\
RMSN & 0.357$\pm$0.04 & 1.242$\pm$0.70 & 0.830$\pm$0.13 & 0.786$\pm$0.21 & --- & --- & --- \\
G-Net & 0.661$\pm$0.04 & 1.333$\pm$0.86 & 0.984$\pm$0.14 & 1.127$\pm$0.20 & --- & --- & --- \\
CRN & 0.325$\pm$0.05 & 1.051$\pm$0.34 & 0.845$\pm$0.37 & 0.821$\pm$0.35 & --- & --- & --- \\
TE-CDE & 1.122$\pm$0.08 & 1.305$\pm$0.38 & 1.283$\pm$0.37 & 1.146$\pm$0.20 & --- & --- & --- \\
D3 & --- & --- & --- & \underline{0.620$\pm$0.11} & 0.567$\pm$0.14 & 0.587$\pm$0.15 & 0.633$\pm$0.17 \\
DLTMLE & --- & --- & --- & 0.852$\pm$0.21 & 0.662$\pm$0.16 & 0.685$\pm$0.17 & 0.734$\pm$0.20 \\
DeepACE & --- & --- & --- & 0.682$\pm$0.21 & --- & --- & --- \\
\midrule
$\Delta$ vs.\ best baseline & -12.2\% & +12.6\% & -0.3\% & -6.1\% & -3.6\% & -7.9\% & -10.8\% \\
\bottomrule
\end{tabular}}
\end{table*}

\subsection{Results}
\label{sec:results}
\noindent \textbf{Synthetic/Semi-synthetic Data} 
Table~\ref{tab:rmse_benchmark} shows that \methodname{} achieves the lowest horizon-averaged RMSE in six of seven settings, reducing RMSE over the best baseline by 12.2\% on \texttt{cancer} and by up to 10.8\% on the remaining MIMIC benchmarks. The exception is \texttt{mimic\_y1}, where it ranks third, with an RMSE 12.6\% higher than that of the best baseline.
Figure~\ref{fig:rmse_top3} selects the top three models in Table~\ref{tab:rmse_benchmark} and visualizes their RMSE results for every horizon in each data setting, showing that its competitive performance holds across horizons (Full results across horizons can be found in Table~\ref{tab:full_benchmark_per_horizons}). 

Table~\ref{tab:runtime} shows measurements of deployment runtime on three representative settings: \methodname{} is about $100\times$ faster than CT and $5000\times$ faster than IGC-Net, the two strongest baselines with comparable accuracy in many benchmark settings. Figure~\ref{fig:rank_runtime_tradeoff} further compares the RMSE ranking and deployment runtime; results are aggregated across horizons (Appendix~\ref{appx:runtime_measurement}). 
Together, these results show that \methodname{} combines top-ranking accuracy across diverse settings with substantially faster runtime.

\begin{wraptable}{r}{0.55\textwidth}
    \centering
    \caption{\textbf{Factual Outcome Prediction on MIMIC.} Normalized RMSE of the factual diastolic blood pressure on the held-out test patients of 10 sampled cohorts. 
    }
    \label{tab:mimic_real}
    \resizebox{\linewidth}{!}{%
    \begin{tabular}{lcccc}
    \toprule
    Method & $\tau=2$ & $\tau=3$ & $\tau=4$ & $\tau=5$ \\
    \midrule
    \textbf{PICLE-PFN} & \textbf{0.730$\pm$0.030} & \textbf{0.737$\pm$0.032} & \textbf{0.754$\pm$0.026} & \textbf{0.767$\pm$0.037} \\
    IGC-Net & 0.736$\pm$0.040 & 0.740$\pm$0.038 & 0.762$\pm$0.031 & 0.779$\pm$0.028 \\
    \midrule
    PICLE-PFN wins & 7/10 & 5/10 & 8/10 & 8/10 \\
    \bottomrule
    \end{tabular}}
\end{wraptable}

\noindent \textbf{Real-world Data.} We further evaluate prediction under the behavioral policy on real-world MIMIC data, using vasopressor administration and mechanical ventilation as treatments and multi-step-ahead diastolic blood pressure as the outcome. To avoid introducing a separately fitted behavioral model, we use factual treatments in IGC-Net's recursive regression targets and as policy demonstrations for \methodname{}; implementation details appear in Appendix~\ref{appx:mimic_real}. IGC-Net is the only baseline that accommodates both multivariate treatments and dynamic policies as in this setting, so we restrict the comparison accordingly. Table~\ref{tab:mimic_real} shows that \methodname{} achieves slightly lower prediction error across all horizons, suggesting that it retains strong predictive performance on factual outcomes in real-world data. We view this experiment as a real-world sanity check, complementary to the synthetic and semi-synthetic benchmarks where counterfactual ground truth is available.

\begin{figure}[t]
    \centering
    \includegraphics[width=0.8\linewidth]{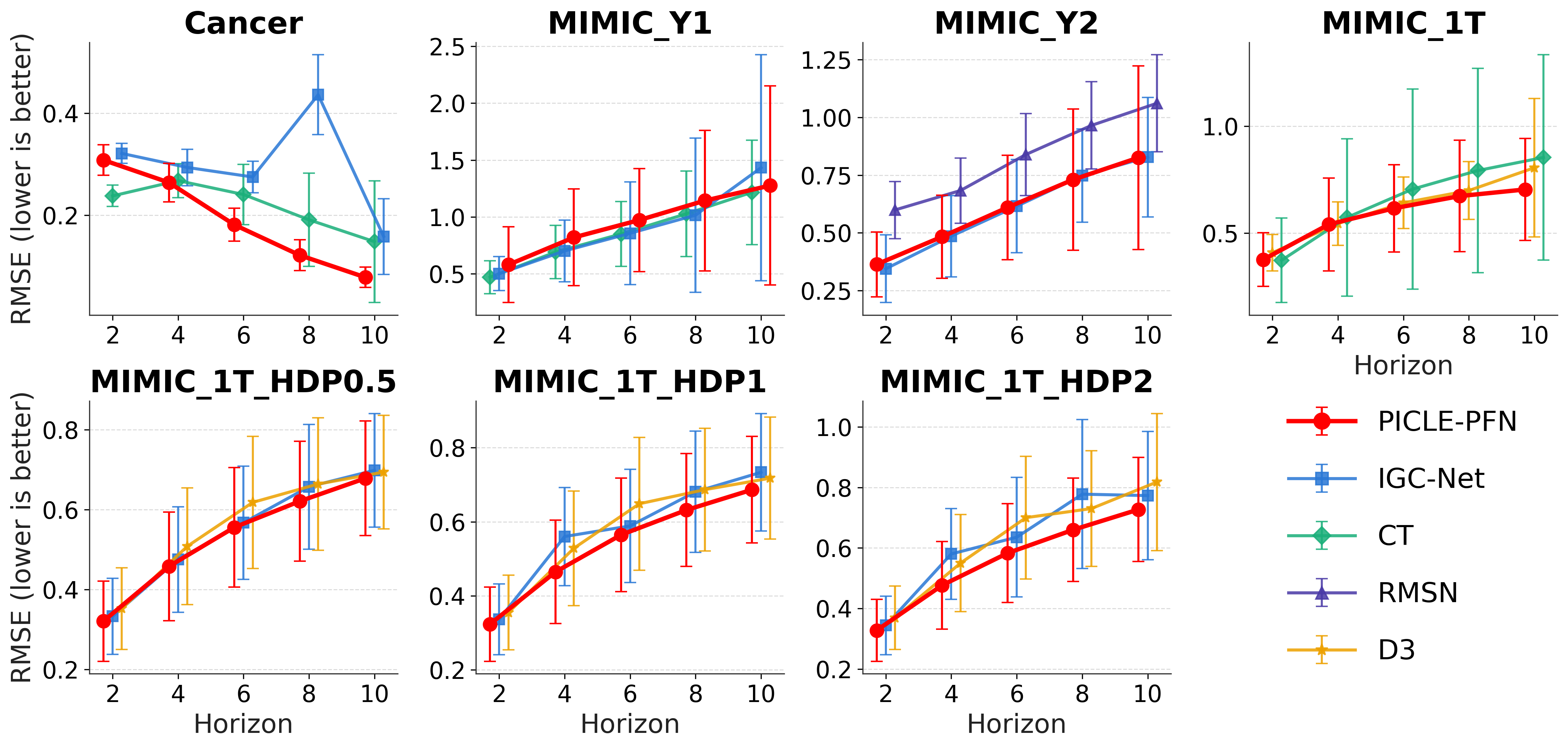}
    \caption{RMSE across horizons of the top-3 methods in each setting. Points are the normalized RMSE averaged over the 20 tasks, error bars are $\pm 1$ SD across tasks}
    \label{fig:rmse_top3}
\end{figure}

\begin{figure}[t]
    \centering
    \includegraphics[width=0.9\linewidth]{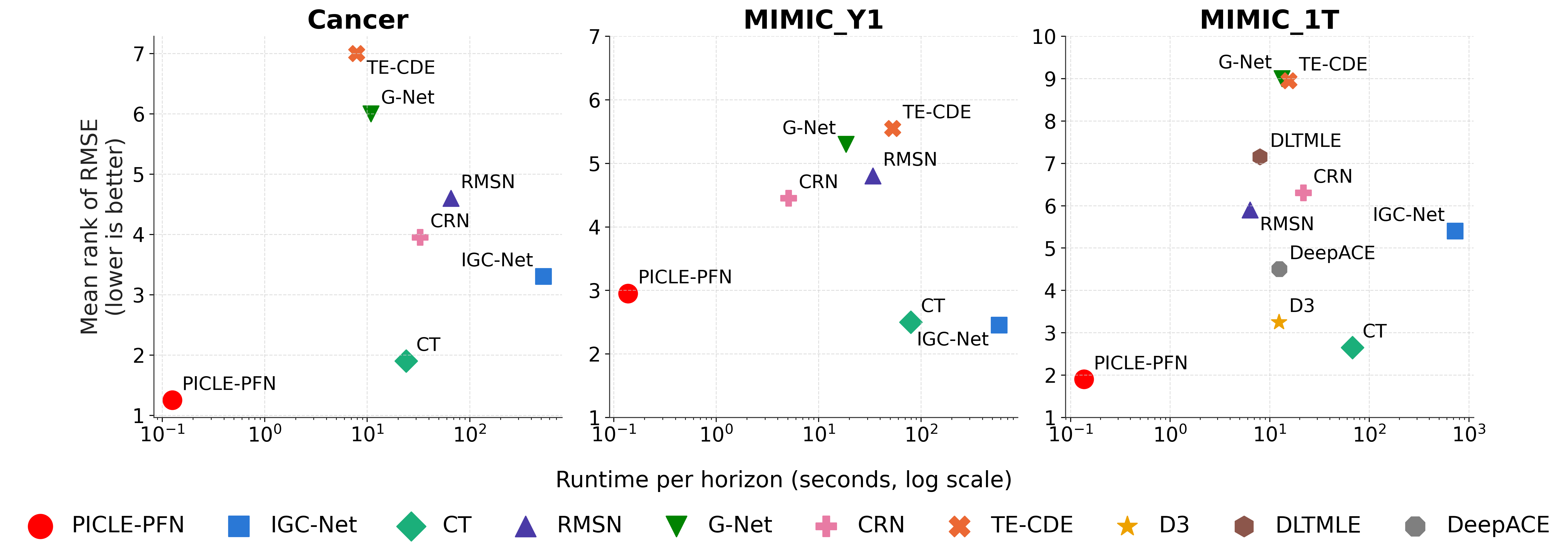}
    \caption{RMSE rank (y-axis) versus inference runtime (x-axis; log scale) across datasets.}
    \label{fig:rank_runtime_tradeoff}
    \vspace{-1em}
\end{figure}

\noindent \textbf{Uncertainty Calibration.}\;
We compare the empirical coverage of CAPO by the intervals produced by \methodname{} and the strongest baseline, IGC-Net. \methodname{} outputs a predictive distribution in a single forward pass, from which we extract equal-tailed $100(1-\alpha)\%$ intervals.\footnote{The single-draw supervision during pretraining introduces outcome variability, so these should not be interpreted as credible intervals for CAPO.} For IGC-Net, we construct intervals using MC dropout, following its original uncertainty-quantification approach. A query trajectory is covered if its true CAPO lies inside the interval; task-level coverage is the fraction of covered queries. We plot empirical CAPO coverage against the nominal interval level $1-\alpha$, using the diagonal as a reference. Evaluation details are in Appendix~\ref{appx:uncertainty_calibration}.

Figure~\ref{fig:calibration_curve} (left) compares uncertainty calibration on \texttt{mimic\_y1} and \texttt{mimic\_1t} through empirical CAPO coverage, averaged across horizons. \methodname{} tracks the nominal coverage levels substantially more closely than IGC-Net, whose MC-dropout uncertainty estimates are severely overconfident. Across horizons, \methodname{} maintains coverage near 0.9 at the nominal 95\% level (Figure~\ref{fig:calibration_curve}, middle), while its predictive uncertainty increases, as reflected by wider intervals (Figure~\ref{fig:calibration_curve}, right). This is the desired behavior: the model widens its intervals to reflect the growing uncertainty of longer horizons, and thereby maintains coverage. In contrast, IGC-Net's intervals do not widen with longer horizons, and even narrow in one setting, suggesting its substantial overconfidence across horizons. These results show that \methodname{} expresses predictive uncertainty more consistently with its observed CAPO estimation errors.

\begin{figure}[t]
    \centering
    \includegraphics[width=0.8\linewidth]{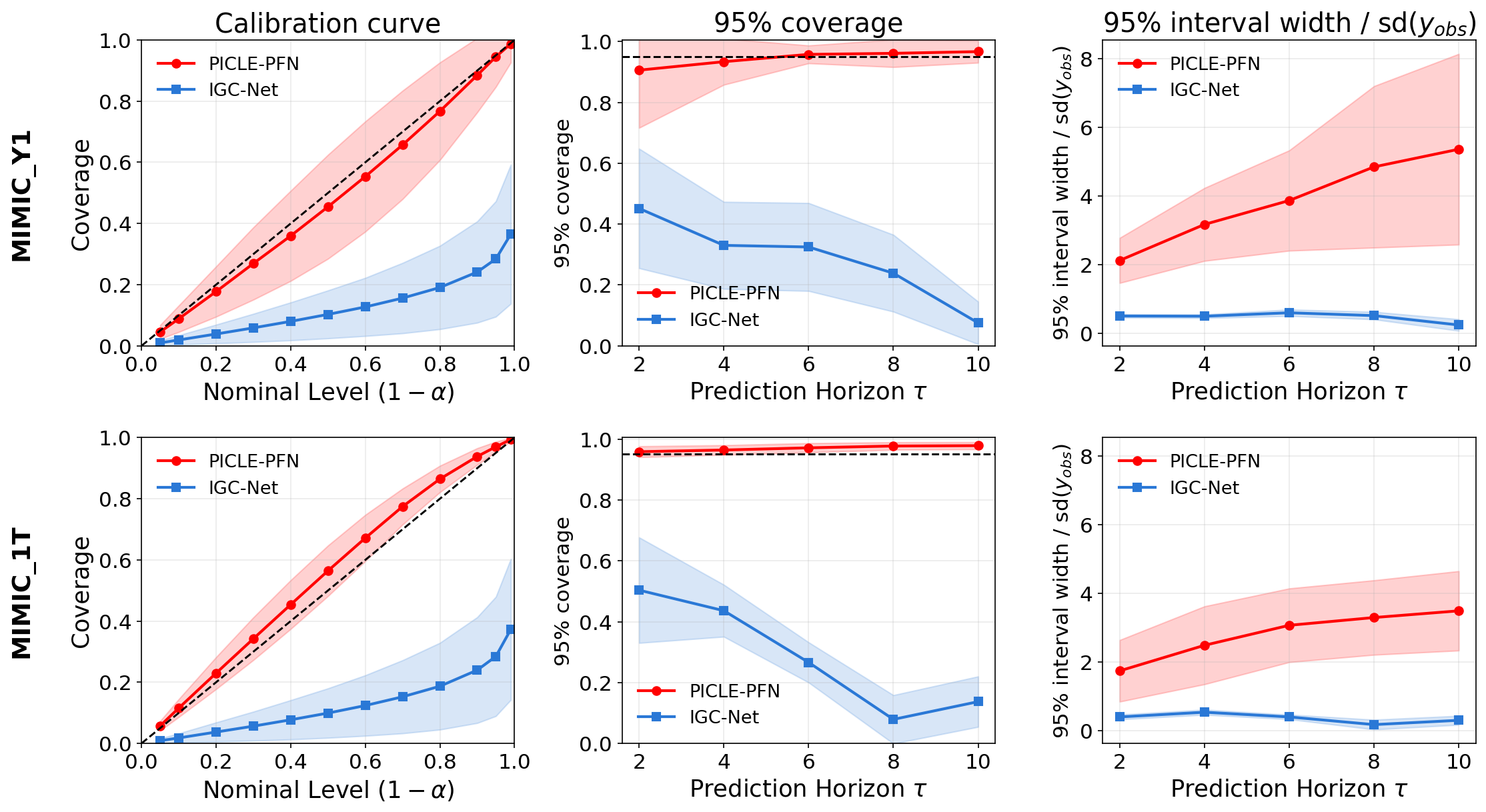}
    \caption{\textbf{Uncertainty Calibration Analysis.} Left: empirical coverage against its nominal level, averaged across horizons and 20 tasks; dashed diagonal = perfect calibration. Middle: coverage of the 95\% interval per horizon
(dashed line = nominal). Right: 95\% interval width per horizon.}
    \label{fig:calibration_curve}
\end{figure}

\begin{figure}[t]
    \centering
    \includegraphics[width=0.8\linewidth]{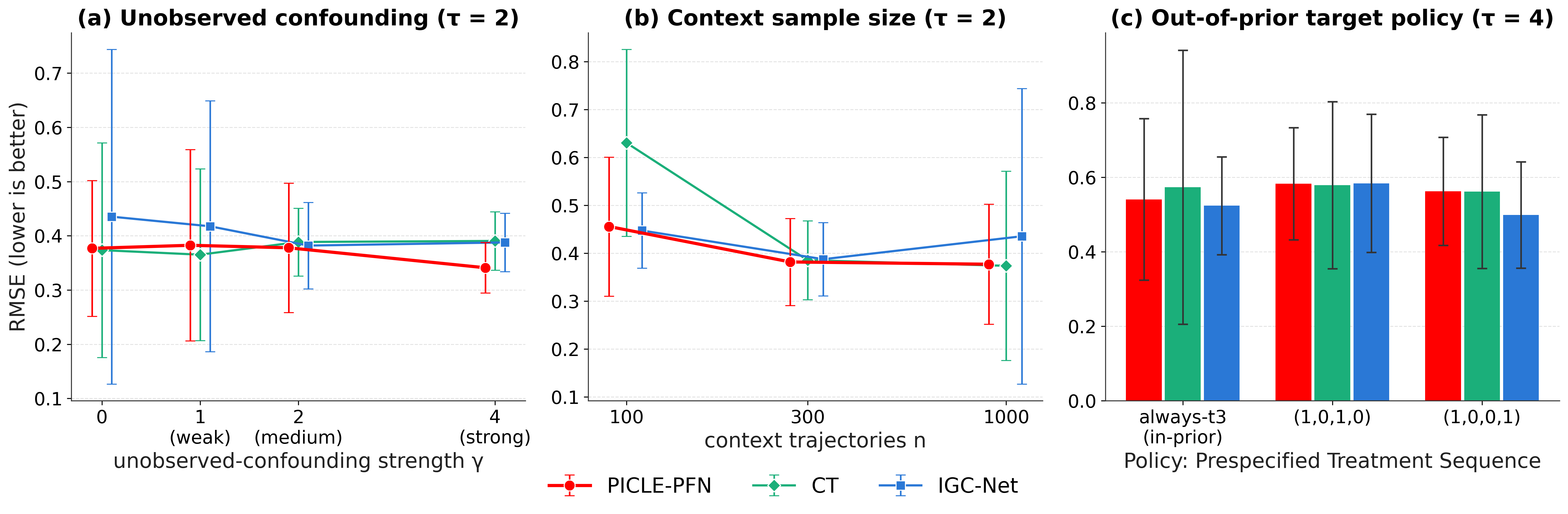}
    \caption{\textbf{Ablation studies.} (a) Unobserved confounding. (b) Context sample size. (c) Our-of-prior target policy: Static alternating treatments. Compared with the in-prior always-treat policy.}
    \label{fig:ablation}
    \vspace{-1em}
\end{figure}

\noindent \textbf{Unobserved Confounding.}\; 
We examine how \methodname{} and the two strongest baselines, CT and IGC-Net under unobserved confounder.
On \texttt{mimic\_1t}, we draw 
a confounder $U \sim \mathcal{N}(0,1)$ 
that is hidden from all methods and enters the data-generating process twice by shifting the patient's outcome 
and propensity
at every step. 
We vary the strength of unobserved confounding (details in Appendix~\ref{appx:unobserved_confounding}).
Figure~\ref{fig:ablation}(a) shows that \methodname{} and IGC-Net remain relatively stable across the tested range, whereas CT's performance deteriorates as confounding becomes stronger.

\noindent \textbf{Varying Context Sample Size.}\; 
Smaller samples increase finite-sample error and, for our method, provide fewer demonstrations of the target policy.
We test robustness under smaller context size, varying the context size 
on \texttt{mimic\_1t} (static) and \texttt{mimic\_1t\_hdp1} (dynamic policy). 
Baselines are re-tuned and re-trained at every size. Figure~\ref{fig:ablation}(b) and Table~\ref{tab:ablation_sample_size} show that RMSE increases for all methods as sample decreases, but \methodname{} remains more robust.

\noindent \textbf{Target Policies Outside the Prior.}\; Our prior covers both the DGP and the target policy, which raises a natural question: how does \methodname{} perform when the deployment policy is not represented in the policy prior? The policy prior contains history-dependent (dynamic) policies, sampled from neural networks and trees, and three types of static policies: always treat, never treat, and treat for the first few steps only. None of these produces an \emph{alternating} treatment sequence. On \texttt{mimic\_1t} with $\tau=4$, we evaluate two unseen policies, applied regardless of how the patient responds: treat at every other step, $(1,0,1,0)$, and treat at the first and last steps only, $(1,0,0,1)$. The observed data are identical to those of the original evaluation. 
Figure~\ref{fig:ablation}(c) shows that \methodname{} achieves comparable performance with the two strong baselines
on the both unseen policies. 
The result suggests that in-context conditioning on the target policy can generalize beyond the policy families represented in the prior. 

\section{Conclusion, Limitations, and Future Work}
\vspace{-5pt}
We reformulate longitudinal CAPO estimation as an ICL task by representing target policies through decisions on factual histories across the observed dataset. We introduce \methodname{}, the first foundation model to longitudinal CAPO estimation supporting both static and dynamic policies. On synthetic and semi-synthetic benchmarks, \methodname{} matches or outperforms baselines at a fraction of their deployment runtime. We further evaluate its ability to predict factual outcomes in real-world data. Further analyses demonstrate strong uncertainty calibration, greater resilience to small samples and unobserved confounders, and generalization to policies unseen during pretraining.

Several limitations motivate future work. (i) \emph{Computational complexity.} Quadratic costs in sample size $N$ and trajectory length $T$ (Appendix~F) limit scalability. Advances in scalable tabular foundation models~\citep{qu2026tabiclv,grinsztajn2026tabpfn} may inform more efficient architectures. (ii) \emph{Scope of pretraining.} Our model considers binary treatments and discrete-time trajectories without right censoring. Extending it to broader treatment spaces, continuous-time series, and censored outcomes would expand its applicability. (iii) \emph{Identifying assumptions and prior dependence.} Causal identification requires assumptions such as sequential ignorability; empirical robustness to certain violations does not establish theoretical guarantees under unobserved confounding. Accuracy may also deteriorate when deployment settings differ from the pretraining distribution. Characterizing this sensitivity and developing guarantees under prior misspecification remain important directions.


\clearpage

\bibliography{ref}
\bibliographystyle{iclr2027_conference}

\appendix
\setcounter{figure}{0}
\setcounter{table}{0}

\renewcommand{\thefigure}{S.\arabic{figure}}
\renewcommand{\thetable}{S.\arabic{table}}

\renewcommand{\theHfigure}{S.\arabic{figure}}
\renewcommand{\theHtable}{S.\arabic{table}}

\newpage
\section*{Appendix}

\section{Proof}
\PropPolicyDemoSufficiency*

\begin{proof}
\label{proof:policy_demo_sufficiency}

We first recover the deterministic policies from the population
demonstrations, then show by backward induction that they uniquely
determine the ICE recursion.

For notational convenience, write
\[
\pi_t(a\mid h_t)
:=
\mathbf{1}\{a=\pi_t(h_t)\}
\]
for the degenerate treatment distribution induced by the deterministic
rule $\pi_t$. Use the same convention for any other policy considered
below.

For each $t\in\{s,\dots,e\}$, let $P_{0}$ denote the observed
marginal distribution of $H_t$. By construction,
\[
\Pr(A_t^{\pi_t}=a\mid H_t=h_t)
=
\mathbf{1}\{a=\pi_t(h_t)\}
=
\pi_t(a\mid h_t).
\]
Thus, the joint distribution of $(H_t,A_t^{\pi_t})$ uniquely determines
$\pi_t(h_t)$, and hence $\pi_t(\cdot\mid h_t)$, for
$P_{0}$-almost every $h_t$.

To connect these policy conditionals to the target estimand,
we use the iterative conditional expectation (ICE) representation
of the longitudinal g-formula. Define
$$
Q_e^\pi(h_e,a_e)
=
\mathbb E_{P_0}
\left[
Y_{e+1}\mid H_e=h_e,A_e=a_e
\right],
$$
and
$$
V_e^\pi(h_e)
=
\sum_{a_e:\,\pi_e(a_e\mid h_e)>0}
\pi_e(a_e\mid h_e)Q_e^\pi(h_e,a_e).
$$
For $t=e-1,\dots,s$, recursively define
$$
Q_t^\pi(h_t,a_t)
=
\mathbb E_{P_0}
\left[
V_{t+1}^\pi(H_{t+1})
\mid H_t=h_t,A_t=a_t
\right],
$$
and
$$
V_t^\pi(h_t)
=
\sum_{a_t:\,\pi_t(a_t\mid h_t)>0}
\pi_t(a_t\mid h_t)Q_t^\pi(h_t,a_t).
$$
Under the stated longitudinal identification assumptions ~\citep{shirakawa2024longitudinal,wen2021parametric,hess2026igc},
the g-formula gives
$$
V_s^\pi(h_s)
=
\mathbb E
\left[
Y_{e+1}^{\pi_{s:e}}\mid H_s=h_s
\right]
=
\psi^{\pi_{s:e}}(h_s)
$$
for almost every $h_s$ under $P_{0}$.

It therefore suffices to establish uniqueness of $V_s^\pi$.
Consider two policy sequences $\pi$ and $\widetilde\pi$ that
generate the same population demonstration distributions.
Let $(Q_t^\pi,V_t^\pi)$ and
$(Q_t^{\tilde \pi},V_t^{\tilde \pi})$ denote their corresponding
ICE recursions under the same $P_0$. Policy recovery implies
$$
\pi_t(\cdot\mid h_t)
=
\widetilde\pi_t(\cdot\mid h_t)
\qquad P_{0}\text{-a.e.}
$$
We show by backward induction that
$V_t^\pi=V_t^{\tilde \pi}$ almost everywhere under $P_{0}$.

For the base case $t=e$, both terminal regressions are conditional
expectations of the same observed outcome under $P_0$. Hence,
$$
Q_e^\pi(H_e,A_e)
=
Q_e^{\tilde \pi}(H_e,A_e)
\qquad P_0\text{-a.s.}
$$
By positivity,
$$
\pi_e(a_e\mid h_e)>0
\implies
\pi_e^*(a_e\mid h_e)>0.
$$
Consequently, for almost every $h_e$ under $P_{0}$,
the terminal regressions agree on every action contributing
to the policy-weighted sum. Together with
$\pi_e=\widetilde\pi_e$ almost everywhere, this gives
$$
V_e^\pi(h_e)=V_e^{\tilde \pi}(h_e)
\qquad P_{0}\text{-a.e.}
$$

For the induction step, suppose
$V_{t+1}^\pi=V^{\tilde \pi}_{t+1}$ almost everywhere under
$P_{0}$. We first show that any remaining disagreement
on an observed-history null set cannot affect $Q_t^\pi$.
Define
$$
N_{t+1}
=
\{h_{t+1}:V_{t+1}^\pi(h_{t+1})
\neq V^{\tilde \pi}_{t+1}(h_{t+1})\}.
$$
The induction hypothesis implies
$$
\begin{aligned}
P_0(H_{t+1}\in N_{t+1})=0
&\Longrightarrow
P_0(H_{t+1}\in N_{t+1}\mid H_t,A_t)=0
\quad P_0\text{-a.s.}
\\
&\Longrightarrow
Q_t^\pi(H_t,A_t)
=
Q_t^{\tilde \pi}(H_t,A_t)
\quad P_0\text{-a.s.}
\end{aligned}
$$
The second implication follows because $Q_t^\pi$ and
$Q_t^{\tilde \pi}$ average $V_{t+1}^\pi$ and
$V_{t+1}^{\tilde \pi}$ under the same observed conditional
distribution.

We then apply the same positivity argument as in the base case:
the regressions agree on all actions contributing to the
policy-weighted sum, for almost every observed history.
Since $\pi_t=\widetilde\pi_t$ almost everywhere,
$$
V_t^\pi(h_t)
=
V_t^{\tilde \pi}(h_t)
\qquad P_{0}\text{-a.e.}
$$
This completes the induction.

In particular,
$$
V_s^\pi(h_s)= V_s^{\tilde \pi}(h_s)
\qquad P_{0}\text{-a.e.}
$$
Since the ICE identification result equates $V_s^\pi$ with
the target functional, $P_0$ and the population policy
demonstration distributions uniquely determine
$\Psi^{\pi_{s:e}}(P_0)(h_s)$ for almost every $h_s$
under $P_{0}$.
\end{proof}

\section{Implementation Details}
\subsection{Model Architecture}\label{appx:architecture}
The model operates on two kinds of tokens: \emph{observed data tokens}, which represent the factual trajectories, and \emph{readout tokens}, which combine the factual representations with target policy demonstrations to predict potential outcomes. Both are processed by the same stack of eight transformer layers. Each layer has two parts: an \emph{observed data backbone} that updates the observed data tokens, followed by a \emph{readout update} in which the readout tokens attend to the backbone's output at that layer. 
The backbone never attends to readout tokens, so its representations are independent of the target policy and intervention window. 
We first describe the architecture for a single intervention window $(s,e)$, and then explain how the same construction is extended to all admissible windows in parallel.

\paragraph{Feature Pooling and Tokenization.}
Each trajectory is represented by $2T$ pooled embeddings corresponding to the interleaved sequence $L_1,A_1,\dots,L_T,A_T$. To encode each $L$ or $A$ vector, we first embed its scalar features and add learned column-identity embeddings. A feature-attention layer models interactions among features using self-attention, followed by attention pooling with learned query tokens to aggregate them into a single embedding~\citep{lee2019set, qu2025tabicl, grinsztajn2026tabpfn}. Baseline covariates $B$ are encoded in the same way, and their pooled embedding is added to every $L_t$. Each sequence position also receives a position embedding over the $2T$ positions plus an $L$/$A$ type embedding.

The readout tokens $R_{s:e}$ are initialized with three components: (i) Start- and end-index embeddings; (ii) The initial target policy demonstration $A^{\pi_s}_{s}$; (iii) Outcome: factual outcome $Y_{e+1}$ for context trajectories, and a null value for query trajectories.

\paragraph{Observed data backbone.} 
Within each transformer layer, the backbone updates the observed data tokens by (i) causal attention\footnote{Here "causal attention" refers only to temporally masked attention: each token may attend to the past and present, but not the future. Following common usage, we retain this term, although the layer does not itself encode or identify causal relationships among variables.} along time, allowing $L_t$ to attend to $(B, L_{1:t},A_{1:t-1})$ and $A_t$ to attend to $(B, L_{1:t},A_{1:t})$; (ii) row attention across trajectories at each time position, with keys and values restricted to context trajectories. The output then goes through an MLP layer.

\paragraph{Window-specific readout.} 
Within each transformer layer, after the backbone update, the readout token is updated in three steps. (i) $R_{s:e}$ cross-attends to the backbone representations of its own trajectory at that layer, under an asymmetric, window-specific mask. A query readout accesses only the tokens representing $H_{s}$, whereas a context readout accesses the factual sequence through the end of the window, $(H_e,A_e)$; all later backbone tokens are masked. Context readouts additionally cross-attend to embeddings of the target policy demonstrations $\{A^{\pi_t}_{t}\}_{t=s}^{e}$. We empirically found that additionally supplying the differences between demonstrated and factual treatments, $(A^{\pi_t}_{t}-A_{t})_{t=s}^{e}$ accelerates convergence without changing the information available to the model(Figure~\ref{fig:residual_ablation}). (ii) The readout tokens attend across trajectories, with keys and values restricted to context readouts. (iii) An MLP follows. The backbone and the readout update have separate parameters. After the last layer, a linear head maps each query readout to a bar distribution~\citep{hollmann2025accurate} over the standardized outcome.

\begin{figure}[t]
    \centering
    \includegraphics[width=0.8\linewidth]{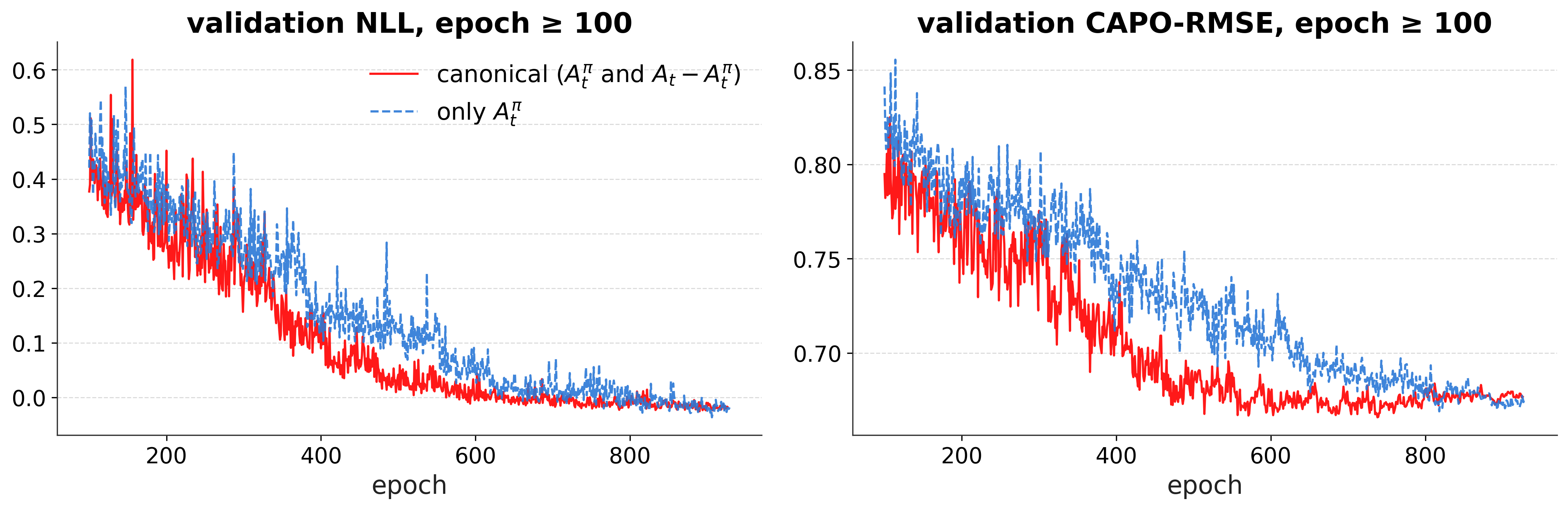}
    \caption{Ablation study on the target policy demonstration channel. Supplying the differences between demonstrated and factual treatments accelerates convergence.}
    \label{fig:residual_ablation}
\end{figure}

\paragraph{Dense CAPO prediction across windows.}
The description above considers a single window $(s,e)$. We instantiate one readout token per trajectory for every admissible window. These tokens share the same parameters and differ only in their initialization and window-specific access masks. Readouts of different windows do not interact: cross-trajectory attention is performed separately for each window, so a readout for $(s,e)$ attends only to context readouts for that same $(s,e)$.

\subsection{Prior Sampling}
\label{appx:prior}
We pretrain exclusively on synthetic tasks, each defined by a longitudinal SCM and a target policy. Each task contains 1024 trajectories, of which the first 80\% serve as context and the remaining 20\% as query trajectories. The SCM specifies the factual data-generating process, including the behavioural policy through its treatment-assignment mechanisms. The target policy may be a prescribed treatment sequence or a dynamic policy that depends on the observed history. We sample each task in five stages.

\begin{enumerate}
\item \textbf{Dimension and variable types} We sample the numbers of baseline covariates, time-varying covariates, and binary treatment dimensions as $d_B\in\{1,\dots,32\}$, $d_L\in\{1,\dots,32\}$, and $d_A\in\{1,2,3\}$, respectively. Time-varying covariates may be continuous or ordinal, while baseline covariates may additionally be nominal. 


\item \textbf{Skeleton.} 
The SCM starts with the baseline variables $(B,L_1)$, which are generated either jointly through a random DAG or with $B$ independent of $L_1$.
For each coordinate of the subsequent variables $(A_{1:T},L_{2:T},Y_{2:T+1})$, we first sample a time-invariant set of \emph{parent variables}. For each selected parent variable, we then sample a set of lag values, with the number of lags drawn as $1+\operatorname{Poisson}(\lambda)$ and lag values selected using geometrically decaying weights that favour shorter lags. Each resulting variable--lag pair specifies a \emph{parent node} in the longitudinal SCM.
Each $L$ coordinate and $Y$ additionally includes its own preceding-step value as a parent node.
We constrain the resulting longitudinal graph to be acyclic: cross-step edges point forward in time, contemporaneous edges among $L$ follow an acyclic ordering, and treatment nodes depend only on preceding nodes in history $H_t$. 

\item \textbf{SCM mechanisms and behavioural policy.} We draw random functions from seven families: neural networks, decision trees, discretizations~\citep{hollmann2025accurate}, linear functions, quadratic functions with pairwise interactions, smooth functions sampled from a Gaussian process, and random activation functions (sigmoid, softplus, or exponential)~\citep{qu2026tabiclv}. For covariate and outcome coordinates, we explicitly incorporate persistence through a per-coordinate weight sampled from $U(0.5,0.95)$ on the preceding-step value, together with a smooth time trend. For a coordinate with treatment parents, its mechanism takes the form $g_0(x)+\sum_i a_i g_i(x),$ where $x$ denotes its non-treatment parents and each $a_i\in\{0,1\}$ denotes a treatment parent at a particular dimension and lag. The functions $g_i$ are drawn independently, allowing treatment effects to vary with the covariates. The treatment-assignment mechanisms define the behavioural policy. For treatment dimension $d$, we set \[ P(A_{d,t}=1\mid H_t) = \sigma\!\left( \frac{z_d(H_t)+w_d(2A_{d,t-1}-1)-c_d}{s_d} \right), \] where $z_d$ is a randomly drawn function of the history, $w_d\geq 0$ controls treatment persistence, and $c_d$ controls treatment prevalence. We draw $s_d\sim\operatorname{LogUniform}(0.05,5)$ to span assignment mechanisms ranging from nearly deterministic to nearly randomised, and vary $c_d$ to produce treatment prevalences ranging from rare to common. Finally, we clip the assignment probabilities to $[0.01,0.99]$ to ensure positivity.

\item \textbf{Target policy.} For each treatment dimension, we sample with equal probability either a history-dependent policy with its own random mechanism or a static treatment sequence. The history-dependent policy is constructed with its own randomly sampled parents and mechanism drawn the same family as the behavioural policy. Static sequences include always treating, never treating, and treating only for the first $k\in\{1,\dots,\tau-1\}$ steps. In half of the tasks, we subsequently blend the sampled target-policy probabilities toward the behavioural probabilities, using a mixing weight drawn from $U(0.3,1)$. After any blending with the behavioral probabilities, we threshold the resulting probabilities 
to obtain a deterministic target-policy rule. This strategy produce target policies that remain close to the behavioural policy.

\item \textbf{Rollout and supervision.} We first generate factual trajectories from the sampled SCM, drawing exogenous noise for the baseline variables and subsequent transitions. For each intervention start step $s$, we retain the observed prefix and roll forward under the target policy for up to $\tau_{\max}$ steps, reusing the same exogenous noise realisations. The resulting single-draw counterfactual outcomes serve as training labels. For a fixed SCM and target policy, their conditional expectation given $H_s$ is the CAPO.

\end{enumerate}

\begin{table}[t]
\centering
\caption{\methodname{} Architecture Hyperparameters.}
\label{tab:arch}
\begin{tabular}{ll}
\toprule
\textbf{Component} & \textbf{Value} \\
\midrule
Transformer layers & 8 \\
Embedding width & 128 \\
Attention heads & 4 \\
MLP hidden width & 512 \\
Activation & GELU \\
Dropout & 0 \\
Learned pooling queries for feature pooling & 4 \\
Maximum sequence length $T$ & 15 \\
Maximum horizon $\tau_{\max}$ & 10 \\
Output buckets (bar distribution) & 100 \\
Total parameters & 5.4 M \\
\bottomrule
\end{tabular}
\end{table} 

\subsection{Hyperparameters and Training}\label{appx:hparams}

Table~\ref{tab:arch} lists the hyperparameters of the model described in Appendix~\ref{appx:architecture}.
Every attention module, including the feature-attention layer and the attention pooling, uses the same number of heads. The observed data backbone and the readout update have separate parameters in each of the eight layers; a final layer normalization is applied to the readout tokens before the output head. The model has 5.4M parameters, 4.8M of which lie in the transformer layers and the remainder in the tokenizers, embeddings, and output head. We train with AdamW. The learning rate is warmed up linearly to 0.001 and then halved whenever the validation loss has not improved for 40 epochs. Model selection uses a fixed set of 128 held-out validation tasks drawn from the prior; we report the checkpoint with the lowest validation loss. At inference, a task is processed in a single forward pass with no per-task fitting.

\section{Additional Results}
Full CAPO RMSE results across all methods and all horizons are provided in Table~\ref{tab:full_benchmark_per_horizons}. Detailed runtime results across horizons are in Table~\ref{tab:runtime}. 
\begin{table*}[t]
\centering
\caption{\textbf{Longitudinal CAPO benchmark, per horizon.} Each cell reports the normalized RMSE, mean$\pm$SD across the 20 repeats; values in parentheses are the mean$\pm$SD of the method's per-task RMSE ranking ($\downarrow$ better). The \textbf{best} and \underline{second best} methods per benchmark and horizon are highlighted.}
\label{tab:full_benchmark_per_horizons}
\resizebox{\textwidth}{!}{%
\begin{tabular}{@{}llccccc@{}}
\toprule
Benchmark & Method & $\tau=2$ & $\tau=4$ & $\tau=6$ & $\tau=8$ & $\tau=10$ \\
\midrule
\multirow{7}{*}{Cancer} & \textbf{PICLE-PFN} & \underline{0.31$\pm$0.03 (2.3$\pm$0.6)} & \textbf{0.26$\pm$0.04 (1.6$\pm$0.8)} & \textbf{0.18$\pm$0.03 (1.1$\pm$0.3)} & \textbf{0.12$\pm$0.03 (1.1$\pm$0.3)} & \textbf{0.08$\pm$0.02 (1.1$\pm$0.3)} \\
 & IGC-Net & 0.32$\pm$0.02 (2.6$\pm$0.5) & 0.29$\pm$0.04 (2.5$\pm$0.8) & 0.27$\pm$0.03 (3.3$\pm$0.7) & 0.44$\pm$0.08 (4.8$\pm$0.5) & 0.16$\pm$0.07 (3.4$\pm$1.2) \\
 & CT & \textbf{0.24$\pm$0.02 (1.1$\pm$0.2)} & \underline{0.27$\pm$0.03 (1.9$\pm$0.7)} & \underline{0.24$\pm$0.06 (2.4$\pm$1.0)} & \underline{0.19$\pm$0.09 (2.5$\pm$1.1)} & \underline{0.15$\pm$0.12 (2.8$\pm$1.3)} \\
 & RMSN & 0.55$\pm$0.03 (4.7$\pm$0.5) & 0.50$\pm$0.07 (5.0$\pm$0.2) & 0.35$\pm$0.06 (4.5$\pm$0.8) & 0.23$\pm$0.05 (3.2$\pm$0.7) & 0.16$\pm$0.04 (3.7$\pm$0.9) \\
 & G-Net & 0.75$\pm$0.03 (6.0$\pm$0.2) & 0.68$\pm$0.04 (6.0$\pm$0.0) & 0.63$\pm$0.05 (6.0$\pm$0.0) & 0.62$\pm$0.05 (6.0$\pm$0.0) & 0.62$\pm$0.06 (6.0$\pm$0.0) \\
 & CRN & 0.50$\pm$0.12 (4.4$\pm$0.8) & 0.40$\pm$0.06 (4.0$\pm$0.2) & 0.31$\pm$0.05 (3.7$\pm$1.0) & 0.23$\pm$0.05 (3.4$\pm$0.9) & 0.18$\pm$0.05 (4.0$\pm$0.8) \\
 & TE-CDE & 0.84$\pm$0.02 (7.0$\pm$0.2) & 1.02$\pm$0.07 (7.0$\pm$0.0) & 1.17$\pm$0.10 (7.0$\pm$0.0) & 1.27$\pm$0.12 (7.0$\pm$0.0) & 1.31$\pm$0.12 (7.0$\pm$0.0) \\
\midrule
\multirow{7}{*}{MIMIC\_Y1} & \textbf{PICLE-PFN} & 0.58$\pm$0.33 (3.2$\pm$1.4) & 0.82$\pm$0.43 (3.0$\pm$1.5) & 0.97$\pm$0.45 (3.2$\pm$1.5) & 1.14$\pm$0.62 (3.1$\pm$1.4) & \underline{1.28$\pm$0.88 (2.5$\pm$1.5)} \\
 & IGC-Net & \underline{0.50$\pm$0.15 (2.2$\pm$1.4)} & \underline{0.70$\pm$0.27 (2.4$\pm$1.5)} & \underline{0.86$\pm$0.45 (2.1$\pm$1.3)} & \textbf{1.02$\pm$0.68 (2.1$\pm$1.3)} & 1.43$\pm$0.99 (3.1$\pm$1.5) \\
 & CT & \textbf{0.47$\pm$0.15 (2.0$\pm$1.0)} & \textbf{0.69$\pm$0.23 (2.5$\pm$1.1)} & \textbf{0.85$\pm$0.28 (2.5$\pm$1.1)} & \underline{1.03$\pm$0.38 (2.9$\pm$1.3)} & \textbf{1.22$\pm$0.46 (3.1$\pm$1.6)} \\
 & RMSN & 0.66$\pm$0.26 (4.8$\pm$1.3) & 0.99$\pm$0.49 (4.7$\pm$1.3) & 1.31$\pm$0.76 (4.7$\pm$1.7) & 1.56$\pm$1.01 (4.8$\pm$1.8) & 1.69$\pm$1.12 (4.5$\pm$2.1) \\
 & G-Net & 0.91$\pm$0.33 (6.3$\pm$1.2) & 1.17$\pm$0.64 (6.0$\pm$1.4) & 1.38$\pm$0.92 (5.2$\pm$1.6) & 1.55$\pm$1.16 (4.8$\pm$1.7) & 1.66$\pm$1.28 (4.5$\pm$1.7) \\
 & CRN & 0.54$\pm$0.16 (3.5$\pm$1.2) & 0.83$\pm$0.32 (3.8$\pm$1.7) & 1.09$\pm$0.40 (4.3$\pm$2.0) & 1.32$\pm$0.45 (4.7$\pm$2.2) & 1.48$\pm$0.45 (4.9$\pm$2.3) \\
 & TE-CDE & 0.86$\pm$0.41 (6.0$\pm$1.0) & 1.09$\pm$0.31 (5.6$\pm$1.7) & 1.37$\pm$0.37 (5.7$\pm$1.5) & 1.57$\pm$0.49 (5.5$\pm$1.6) & 1.72$\pm$0.63 (5.1$\pm$1.6) \\
\midrule
\multirow{7}{*}{MIMIC\_Y2} & \textbf{PICLE-PFN} & \underline{0.36$\pm$0.14 (1.9$\pm$0.9)} & \textbf{0.48$\pm$0.18 (1.8$\pm$1.0)} & \textbf{0.61$\pm$0.23 (1.9$\pm$1.1)} & \textbf{0.73$\pm$0.31 (1.8$\pm$1.4)} & \textbf{0.83$\pm$0.40 (1.7$\pm$1.3)} \\
 & IGC-Net & \textbf{0.34$\pm$0.15 (1.8$\pm$0.8)} & \underline{0.49$\pm$0.18 (2.0$\pm$0.9)} & \underline{0.62$\pm$0.20 (2.0$\pm$1.5)} & \underline{0.75$\pm$0.20 (2.4$\pm$1.0)} & \underline{0.83$\pm$0.26 (2.3$\pm$1.4)} \\
 & CT & 0.51$\pm$0.46 (3.0$\pm$1.5) & 0.73$\pm$0.71 (3.2$\pm$1.6) & 0.92$\pm$0.91 (3.2$\pm$1.6) & 1.08$\pm$0.99 (3.3$\pm$1.7) & 1.19$\pm$1.03 (3.6$\pm$1.6) \\
 & RMSN & 0.60$\pm$0.12 (5.0$\pm$1.3) & 0.68$\pm$0.14 (4.5$\pm$1.2) & 0.84$\pm$0.18 (4.5$\pm$1.3) & 0.97$\pm$0.19 (4.5$\pm$1.1) & 1.06$\pm$0.21 (4.3$\pm$1.3) \\
 & G-Net & 0.79$\pm$0.09 (6.6$\pm$0.7) & 0.84$\pm$0.15 (5.8$\pm$1.1) & 0.96$\pm$0.16 (5.5$\pm$1.2) & 1.10$\pm$0.16 (5.5$\pm$1.2) & 1.24$\pm$0.18 (5.4$\pm$1.1) \\
 & CRN & 0.55$\pm$0.28 (4.3$\pm$1.5) & 0.70$\pm$0.29 (4.3$\pm$1.4) & 0.85$\pm$0.37 (4.5$\pm$1.1) & 0.99$\pm$0.44 (4.0$\pm$1.4) & 1.13$\pm$0.51 (4.1$\pm$1.4) \\
 & TE-CDE & 0.65$\pm$0.24 (5.3$\pm$0.9) & 0.93$\pm$0.23 (6.4$\pm$0.8) & 1.26$\pm$0.39 (6.4$\pm$1.0) & 1.61$\pm$0.57 (6.6$\pm$0.8) & 1.96$\pm$0.80 (6.5$\pm$0.9) \\
\midrule
\multirow{10}{*}{MIMIC\_1T} & \textbf{PICLE-PFN} & 0.38$\pm$0.13 (3.1$\pm$2.1) & \underline{0.54$\pm$0.22 (2.6$\pm$2.2)} & \textbf{0.62$\pm$0.20 (2.1$\pm$1.8)} & \textbf{0.67$\pm$0.26 (1.9$\pm$1.9)} & \textbf{0.70$\pm$0.24 (1.9$\pm$1.9)} \\
 & IGC-Net & 0.44$\pm$0.31 (3.7$\pm$2.5) & \textbf{0.52$\pm$0.13 (3.7$\pm$1.6)} & \underline{0.62$\pm$0.14 (3.5$\pm$2.0)} & 1.03$\pm$0.40 (6.9$\pm$2.5) & 0.88$\pm$0.15 (5.2$\pm$2.5) \\
 & CT & 0.37$\pm$0.20 (2.2$\pm$2.0) & 0.57$\pm$0.37 (2.5$\pm$2.0) & 0.71$\pm$0.47 (3.2$\pm$2.2) & 0.79$\pm$0.48 (3.0$\pm$1.9) & 0.85$\pm$0.48 (3.6$\pm$1.8) \\
 & RMSN & 0.47$\pm$0.13 (5.6$\pm$2.0) & 0.70$\pm$0.23 (6.2$\pm$2.2) & 0.85$\pm$0.27 (6.8$\pm$1.8) & 0.92$\pm$0.28 (6.2$\pm$1.8) & 0.99$\pm$0.29 (6.0$\pm$1.5) \\
 & G-Net & 0.93$\pm$0.08 (9.0$\pm$0.9) & 1.08$\pm$0.21 (8.9$\pm$1.2) & 1.21$\pm$0.29 (8.6$\pm$1.5) & 1.21$\pm$0.25 (8.4$\pm$1.6) & 1.21$\pm$0.21 (8.4$\pm$1.6) \\
 & CRN & \textbf{0.36$\pm$0.07 (3.5$\pm$1.4)} & 0.60$\pm$0.22 (5.0$\pm$2.1) & 0.87$\pm$0.41 (6.8$\pm$1.7) & 1.07$\pm$0.54 (6.9$\pm$1.9) & 1.20$\pm$0.61 (7.0$\pm$2.0) \\
 & TE-CDE & 0.96$\pm$0.10 (9.7$\pm$0.6) & 1.08$\pm$0.14 (9.2$\pm$1.0) & 1.21$\pm$0.23 (8.8$\pm$1.4) & 1.23$\pm$0.28 (8.2$\pm$1.7) & 1.24$\pm$0.34 (8.0$\pm$2.0) \\
 & D3 & 0.41$\pm$0.09 (5.8$\pm$1.2) & 0.55$\pm$0.10 (4.7$\pm$1.2) & 0.64$\pm$0.12 (3.6$\pm$1.2) & \underline{0.70$\pm$0.14 (3.2$\pm$1.4)} & \underline{0.81$\pm$0.32 (3.0$\pm$2.2)} \\
 & DLTMLE & 0.55$\pm$0.17 (7.5$\pm$1.0) & 0.67$\pm$0.16 (7.0$\pm$1.5) & 0.94$\pm$0.49 (6.8$\pm$2.0) & 1.04$\pm$0.36 (6.8$\pm$1.8) & 1.06$\pm$0.34 (6.7$\pm$2.0) \\
 & DeepACE & \underline{0.37$\pm$0.08 (4.8$\pm$1.4)} & 0.66$\pm$0.41 (5.1$\pm$2.9) & 0.72$\pm$0.35 (4.7$\pm$2.6) & 0.72$\pm$0.18 (3.4$\pm$1.8) & 0.96$\pm$0.49 (4.9$\pm$2.6) \\
\midrule
\multirow{4}{*}{MIMIC\_1T\_HDP0.5} & \textbf{PICLE-PFN} & \textbf{0.32$\pm$0.10 (1.1$\pm$0.3)} & \textbf{0.46$\pm$0.14 (1.1$\pm$0.4)} & \textbf{0.56$\pm$0.15 (1.6$\pm$0.5)} & \textbf{0.62$\pm$0.15 (1.4$\pm$0.7)} & \textbf{0.68$\pm$0.14 (1.6$\pm$0.8)} \\
 & IGC-Net & \underline{0.33$\pm$0.09 (2.0$\pm$0.6)} & \underline{0.48$\pm$0.13 (2.0$\pm$0.7)} & \underline{0.57$\pm$0.14 (1.6$\pm$0.8)} & \underline{0.66$\pm$0.16 (2.2$\pm$0.6)} & 0.70$\pm$0.14 (2.4$\pm$0.8) \\
 & D3 & 0.35$\pm$0.10 (2.9$\pm$0.3) & 0.51$\pm$0.15 (2.9$\pm$0.4) & 0.62$\pm$0.17 (2.8$\pm$0.4) & 0.66$\pm$0.17 (2.5$\pm$0.8) & \underline{0.69$\pm$0.14 (2.0$\pm$0.8)} \\
 & DLTMLE & 0.39$\pm$0.11 (4.0$\pm$0.2) & 0.56$\pm$0.16 (3.9$\pm$0.3) & 0.69$\pm$0.19 (4.0$\pm$0.0) & 0.81$\pm$0.21 (4.0$\pm$0.0) & 0.86$\pm$0.19 (4.0$\pm$0.0) \\
\midrule
\multirow{4}{*}{MIMIC\_1T\_HDP1} & \textbf{PICLE-PFN} & \textbf{0.32$\pm$0.10 (1.1$\pm$0.4)} & \textbf{0.46$\pm$0.14 (1.1$\pm$0.2)} & \textbf{0.56$\pm$0.15 (1.2$\pm$0.4)} & \textbf{0.63$\pm$0.15 (1.3$\pm$0.7)} & \textbf{0.69$\pm$0.14 (1.4$\pm$0.6)} \\
 & IGC-Net & \underline{0.34$\pm$0.10 (2.1$\pm$0.7)} & 0.56$\pm$0.13 (2.9$\pm$1.0) & \underline{0.59$\pm$0.15 (1.9$\pm$0.7)} & \underline{0.68$\pm$0.16 (2.4$\pm$0.6)} & 0.73$\pm$0.16 (2.7$\pm$0.5) \\
 & D3 & 0.35$\pm$0.10 (2.8$\pm$0.4) & \underline{0.53$\pm$0.15 (2.5$\pm$0.5)} & 0.65$\pm$0.18 (3.0$\pm$0.5) & 0.69$\pm$0.17 (2.4$\pm$0.7) & \underline{0.72$\pm$0.16 (1.9$\pm$0.8)} \\
 & DLTMLE & 0.40$\pm$0.12 (4.0$\pm$0.2) & 0.58$\pm$0.17 (3.5$\pm$0.6) & 0.73$\pm$0.20 (3.9$\pm$0.4) & 0.84$\pm$0.21 (4.0$\pm$0.0) & 0.88$\pm$0.21 (4.0$\pm$0.0) \\
\midrule
\multirow{4}{*}{MIMIC\_1T\_HDP2} & \textbf{PICLE-PFN} & \textbf{0.33$\pm$0.10 (1.1$\pm$0.4)} & \textbf{0.48$\pm$0.14 (1.0$\pm$0.0)} & \textbf{0.58$\pm$0.16 (1.4$\pm$0.6)} & \textbf{0.66$\pm$0.17 (1.4$\pm$0.7)} & \textbf{0.73$\pm$0.17 (1.4$\pm$0.6)} \\
 & IGC-Net & \underline{0.34$\pm$0.10 (2.0$\pm$0.7)} & 0.58$\pm$0.15 (2.9$\pm$0.9) & \underline{0.64$\pm$0.20 (2.0$\pm$0.9)} & 0.78$\pm$0.25 (2.6$\pm$0.7) & \underline{0.77$\pm$0.21 (2.0$\pm$0.9)} \\
 & D3 & 0.37$\pm$0.10 (2.9$\pm$0.4) & \underline{0.55$\pm$0.16 (2.5$\pm$0.6)} & 0.70$\pm$0.20 (2.8$\pm$0.6) & \underline{0.73$\pm$0.19 (2.0$\pm$0.8)} & 0.82$\pm$0.23 (2.7$\pm$0.8) \\
 & DLTMLE & 0.41$\pm$0.11 (4.0$\pm$0.2) & 0.61$\pm$0.17 (3.6$\pm$0.5) & 0.78$\pm$0.22 (3.9$\pm$0.3) & 0.92$\pm$0.27 (4.0$\pm$0.2) & 0.95$\pm$0.28 (3.8$\pm$0.5) \\
\bottomrule
\end{tabular}}
\end{table*}

\begin{table*}[t]
\centering
\caption{\textbf{Runtime(seconds) at deployment.} One evaluation task per setting (1000 context, 1000 queries), one A100 GPU, mean$\pm$SD over 5 timing repeats. Baselines are measured on their tuned HPs. CT, RMSN, G-Net, CRN, and TE-CDE is trained once then evaluated on all five horizons; runtime is computed as total time of one training and five inference calls, divided by five); IGC-net, D3, DLTMLE, and DeepACE train one model per horizon; runtime is measured as one training and one inference call. PICLE-PFN is pretrained and only runs inference.} 
\label{tab:runtime}
\resizebox{\textwidth}{!}{%
\begin{tabular}{@{}llrrrrrrrrr@{}}
\toprule
\multirow{2}{*}{Method} & \multirow{2}{*}{Type} & \multicolumn{3}{c}{Cancer} & \multicolumn{3}{c}{MIMIC\_Y1} & \multicolumn{3}{c}{MIMIC\_1T} \\
\cmidrule(lr){3-5} \cmidrule(lr){6-8} \cmidrule(lr){9-11}
 & & train & infer & total & train & infer & total & train & infer & total \\
\midrule
\textbf{PICLE-PFN} & amortised & --- & 0.13$\pm$0.00 & 0.13$\pm$0.00 & --- & 0.14$\pm$0.00 & 0.14$\pm$0.00 & --- & 0.14$\pm$0.00 & 0.14$\pm$0.00 \\
IGC-Net & fit per horizon & 525$\pm$11 & 0.08$\pm$0.00 & 525$\pm$11 & 582$\pm$5 & 0.08$\pm$0.01 & 582$\pm$5 & 725$\pm$10 & 0.09$\pm$0.02 & 725$\pm$10 \\
CT & fit once & 23.84$\pm$0.57 & 0.25$\pm$0.03 & 24.09$\pm$0.57 & 78.82$\pm$0.47 & 1.08$\pm$0.02 & 79.90$\pm$0.46 & 66.46$\pm$2.04 & 1.26$\pm$0.11 & 67.72$\pm$1.96 \\
RMSN & fit once & 65.62$\pm$1.36 & 0.10$\pm$0.01 & 65.71$\pm$1.35 & 33.96$\pm$0.55 & 0.07$\pm$0.00 & 34.03$\pm$0.55 & 6.26$\pm$0.49 & 0.08$\pm$0.00 & 6.34$\pm$0.50 \\
G-Net & fit once & 4.31$\pm$0.42 & 6.58$\pm$0.73 & 10.88$\pm$0.47 & 12.00$\pm$0.45 & 6.62$\pm$0.18 & 18.62$\pm$0.40 & 4.87$\pm$0.37 & 8.45$\pm$0.56 & 13.32$\pm$0.41 \\
CRN & fit once & 32.85$\pm$0.34 & 0.10$\pm$0.01 & 32.95$\pm$0.34 & 4.97$\pm$0.16 & 0.13$\pm$0.00 & 5.10$\pm$0.17 & 21.64$\pm$0.63 & 0.13$\pm$0.03 & 21.77$\pm$0.62 \\
TE-CDE & fit once & 7.76$\pm$0.53 & 0.13$\pm$0.01 & 7.90$\pm$0.53 & 52.86$\pm$0.29 & 0.11$\pm$0.00 & 52.97$\pm$0.29 & 15.46$\pm$0.46 & 0.12$\pm$0.00 & 15.58$\pm$0.46 \\
D3 & fit per horizon & --- & --- & --- & --- & --- & --- & 12.29$\pm$0.52 & 0.14$\pm$0.02 & 12.42$\pm$0.53 \\
DLTMLE & fit per horizon & --- & --- & --- & --- & --- & --- & 7.85$\pm$0.30 & 0.14$\pm$0.02 & 7.98$\pm$0.28 \\
DeepACE & fit per horizon & --- & --- & --- & --- & --- & --- & 12.48$\pm$0.45 & 0.02$\pm$0.00 & 12.51$\pm$0.45 \\
\bottomrule
\end{tabular}}
\end{table*}

\section{Experiment Details}

\subsection{Benchmark Datasets}\label{appx:benchmark_data}
Our benchmark settings are adapted from \cite{hess2026igc}, including one fully synthetic setting and six semi-synthetic setting based on MIMIC-III. For each benchmark, we generate 20 datasets, each containing
1000 training (context) trajectories and 1000 test (query)
trajectories of length $T = 15$. We estimate CAPO on the query trajectories at five horizons, $\tau\in\{2,4,6,8,10\}$, with the intervention start time fixed at $s=2$. RMSE is first divided by the factual outcome in each synthesized dataset, then macro-averaged across 20 datasets. Mean$\pm$SD of the normalized RMSE is reported.

\subsubsection{\texttt{cancer}}
A fully synthetic tumor-growth simulator commonly used in causal inference literature~\citep{hess2026igc,bica2020estimating,li2021g,lim2018forecasting,melnychuk2022causal}. It has a single time-varying covariate (tumor volume), two binary treatment dimensions (chemotherapy, radiotherapy), and one baseline covariate (patient type). The target policy in this setting is a \emph{static} always-combined-treatment. 
The outcome, tumor volume $Y_t$, evolves as
\begin{equation*}
    Y_{t+1}
    =
    Y_t \left[
        1 + \rho \log\left(\frac{K}{Y_t}\right)
        - \alpha_c c_t
        - \left(\alpha_r d_t + \beta_r d_t^2\right)
        + \epsilon_t
    \right],
\end{equation*}
where $\rho$ and $K$ govern tumor growth and carrying capacity,
respectively. Treatment effects depend on the chemotherapy
drug concentration $c_t$ and radiation dose $d_t$, with
sensitivity parameters $\alpha_c$, $\alpha_r$, and $\beta_r$.
The term $\epsilon_t$ introduces stochastic variation.

Treatment assignment depends on recent tumor progression,
introducing time-varying confounding. Specifically, the binary
chemotherapy and radiotherapy indicators are sampled as
\begin{equation*}
    A_t^c, A_t^r
    \sim
    \operatorname{Bernoulli}\left[
        \sigma\left(
            \frac{\gamma}{D_{\max}}
            \left\{
                \bar D_{15}(\bar Y_{t-1}
                - \frac{D_{\max}}{2})
            \right\}
        \right)
    \right],
\end{equation*}
where $\sigma$ is the logistic function, $D_{\max}$ is the maximum tumour volume, and
$\bar D_{15}(\bar Y_{t-1})$ denotes the mean tumor diameter
over the preceding 15 time steps.
$\gamma$ controls how strongly treatment assignment
depends on tumor history. The target policy is always treating with both chemotherapy and radiation.
We adopt the simulator parameterization of \citet{hess2026igc}, to which we refer
for further implementation details.

\subsubsection{\texttt{mimic\_y1}, \texttt{mimic\_y2}}
These benchmarks, adapted from \cite{hess2026igc}, combine real-world clinical covariate trajectories with simulated
treatments and outcomes. The real-world covariates are obtained using
MIMIC-Extract~\citep{Wang_2020} and comprise 25 time-varying
vital signs, together with gender, ethnicity, and age as
static features. Together with the synthetic outcome and treatments, these two benchmark is characterized by 27 time-varying confounders and 3 treatments.

The simulator first constructs two untreated outcome processes,
indexed by $j\in\{1,2\}$:
\begin{equation}
    \widetilde Y_t^j
    =
    \alpha_s^j \operatorname{B\text{-}spline}(t)
    + \alpha_g^j g^j(t)
    + \alpha_f^j f_Y^j(X_t)
    + \epsilon_t.\label{eq:mimic_untreated_outcome}
\end{equation}
The coefficients $\alpha_s^j$, $\alpha_g^j$, and $\alpha_f^j$
weight the respective components. The spline component is
drawn from a mixture of three cubic splines, while
$f_Y^j$ is sampled using a Random Fourier Feature (RFF)
approximation to a Gaussian process.

Three binary treatments, indexed by $l\in\{1,2,3\}$, are
then assigned according to
\begin{equation}
    A_t^l \sim \operatorname{Bernoulli}\left(\sigma(
        \gamma_Y^l Y_{t-1}^{A,l}
        + \gamma_X^l f_Y^l(X_t)
        + b^l)
    \right),\label{eq:mimic_propensity}
\end{equation}
where $\sigma$ denotes the logistic function.
Here, $Y_{t-1}^{A,l}$ is an averaged subset of previous $l$
treated outcomes, and $f_Y^l$ is a again a 
RFF approximation to a Gaussian
process. The coefficients $\gamma_Y^l$ and $\gamma_X^l$
determine the confounding strength of $A^l$, while $b^l$
sets the treatment-specific intercept.

Finally, the simulator adds treatment effects to the untreated
outcome trajectories via
\begin{equation}
    Y_t^j
    =
    \widetilde{Y}_t^j
    +
    \sum_{i=t-\omega^l}^{t}
    \frac{
        \min_{l=1,\ldots,d_a}
        \mathbf{1}_{\{A_i^l=1\}}\,p_i^l\,\beta^{l,j}
    }{
        (\omega^l-i)^2
    }.\label{eq:mimic_treated_outcome}
\end{equation}
For treatment $l$, the parameter
$\beta^{l,j}$ controls the maximum effect on outcome $j$,
and $\omega^l$ specifies its effect window. 

The two simulated outcomes, $Y^1$ and $Y^2$, are governed by different confounders in the real-world covariates; thus we evaluate them as separate benchmarks:
\texttt{mimic\_y1} targets $Y^1$, and \texttt{mimic\_y2} targets $Y^2$. The simulator parameters follow \citet{hess2026igc}, which we refer to for the full specification.

Both \texttt{mimic\_y1} and \texttt{mimic\_y2} have 3-dim treatments at each time step; in both settings, we specify the target policy to 'always apply the second and third treatments', i.e. $A_t = (0,1,1)$ for $\tau$ steps.

\subsubsection{\texttt{mimic\_1t},\;
\texttt{mimic\_1t\_hdp0.5},\;
\texttt{mimic\_1t\_hdp1},\;
\texttt{mimic\_1t\_hdp2}}
To accommodate baselines that require a single treatment variable,
we construct four variants of \texttt{mimic\_y2} with a single binary
treatment. We retain its simulator parameters and outcome-generation
mechanism, but remove treatments $A^1$ and $A^2$ and their contributions
to $Y^2$. The remaining treatment, $A^3$, is assigned according to
the original behavior policy.

The four benchmarks share this data-generating process and differ
only in their target policies. The \texttt{mimic\_1t} benchmark uses
an always-treat policy, enabling comparisons with baselines restricted
to static treatment sequences. The other three benchmarks use
deterministic dynamic policies obtained by shifting the logit of the
behavior policy:
\begin{equation}
    A_t^{3}
    = \mathds{1}
        \left\{
            \gamma_Y^3 Y_{t-1}^{A,3}
            + \gamma_X^3 f_Y^3(X_t)
            + b^3 > \zeta
            \right\}.
\end{equation}
We set $\zeta\in\{0.5,1,2\}$, with the value appended to
\texttt{hdp} in each benchmark name. Larger values of $\zeta$
increase the target treatment probability relative to the behavior
policy for a given history. $\zeta=0$ will yield a deterministic rule that follows the behavior policy's decision boundary.

\subsection{Baseline Implementations}
We compare against nine baselines. \textbf{CRN}~\citep{bica2020estimating},
\textbf{RMSN}~\citep{lim2018forecasting}, \textbf{TE-CDE}~\citep{seedat2022continuous},
\textbf{G-Net}~\citep{li2021g}, and \textbf{CT}~\citep{melnychuk2022causal}
are taken from the CT codebase~\citep{melnychuk2022causal};
\textbf{D3-Net}~\citep{chen2026deep}, \textbf{DeepLTMLE}~\citep{shirakawa2024longitudinal}, and \textbf{DeepACE}~\citep{frauen2023estimating} from the D3-Net
codebase~\citep{chen2026deep}; and \textbf{IGC-Net}~\citep{hess2026igc} from the authors' public implementation.

There is one major change in our implementation of IGC-Net. As published, it estimates the effect of a single \emph{static} treatment sequence shared by every unit. Neither the paper nor the implementation considers dynamic target policies. However, we noticed that it can be readily extended to support such a setting by modifying a step in data preprocessing. Specifically, we extend IGC-Net to our setting by
replacing the hardcoded treatment sequence with the per-trajectory treatment decision of the target policy $\pi$, leaving the architecture and training objective
untouched. This extension goes beyond the published scope of IGC-Net and is
made in its favour: it is what allows IGC-Net to be run on every setting,
including those with a history-dependent target policy.

G-Net supports dynamic treatment regimes in principle: its Monte Carlo rollout simulates future covariates, on which a regime could be evaluated. The released implementation, however, fixes the full treatment sequence before inference and never re-evaluates a policy on the simulated state, so supporting history-dependent regimes would require non-trivial changes to its inference path. We therefore do not run G-Net in this setting. This omission does not affect our conclusions: under static treatment sequences on the same data (Table~\ref{tab:full_benchmark_per_horizons}, \texttt{mimic\_1t}), G-Net already underperforms the leading baselines by a wide margin at every horizon.

CRN, RMSN, TE-CDE, G-Net, and CT are encoder--decoder models that are \emph{trained once per task} on the full trajectory window ($T=15$ steps) and then evaluated at
every horizon $\tau \in \{2,4,6,8,10\}$. D3-Net, DeepLTMLE, DeepACE, and IGC-Net
are based on ICE G-computation, which requires horizon-specific architectures, so a separate model is trained for every (setting, horizon) cell.

Not every baseline runs on every setting. D3-Net, DeepLTMLE, and DeepACE are
hardwired to a single binary treatment (their propensity head, inverse
propensity ratio, and the debiasing objective all assume single-dimensional treatment), so they are evaluated only on the single-treatment MIMIC settings. On the settings with a history-dependent target policy, only IGC-Net, D3-Net, and DeepLTMLE are run: the encoder--decoder models condition on a fixed treatment sequence and cannot
take a target policy that depends on the evolving state. Cells that a method
cannot serve are left blank in the results tables.

\subsection{Hyperparameter Tuning and Evaluation Protocol}\label{appx:baseline_hparams}
For every baseline, we tune its hyperparameters with Optuna~\citep{optuna_2019} for
2 hours on one A100 GPU, using one held-out tuning task drawn from the same
generator as, and disjoint from, the evaluation tasks. The selection criterion
is held-out \emph{factual} RMSE: counterfactual labels are never used for model
or hyperparameter selection. Hyperparameters are tuned once per setting for the
encoder--decoder models, and once per (setting, horizon) cell for ICE G-computation-based methods. For instance, IGC-Net, uses $5 \times 2 = 10$ hours of tuning per setting. The search spaces of every baseline are listed in Table~\ref{tab:hp_grid}.

The selected hyperparameters are then fixed, and a fresh model is trained on
each of the 20 evaluation tasks. Every task contains 2000 trajectories,
split into 1000 training trajectories and 1000 test trajectories. The
baseline is trained on the training half, and scored on the Monte-Carlo simulated CAPO on the test half. This is the same split our model uses: the training half is its in-context set, and the test half its query set, so every method sees identical data.

\begin{table}[t]
\centering
\small
\caption{Hyperparameter search spaces for the baselines. Each space is
searched with Optuna for 2\,h on one A100, selecting by held-out factual RMSE.
$\mathcal{U}$ and $\log\mathcal{U}$ denote uniform and log-uniform ranges;
braces denote categorical choices. 
}
\label{tab:hp_grid}
\begin{tabular}{@{}lll@{}}
\toprule
Method & Hyperparameter & Search space \\
\midrule
\multirow{7}{*}{CRN, TE-CDE, G-Net}
  & learning rate                    & $\log\mathcal{U}[10^{-4},\,10^{-2}]$ \\
  & number of layers                 & $\{1, 2, 3\}$ \\
  & batch size                       & $\{32, 64, 128, 256\}$ \\
  & sequence hidden units            & $\{0.5, 1, 2, 3, 4\}\times$ input dim \\
  & balanced representation size     & $\{0.5, 1, 2, 3, 4\}\times$ input dim \\
  & FC hidden units                  & $\{0.5, 1, 2, 3, 4\}\times$ input dim \\
  & dropout                          & $\mathcal{U}[0,\,0.5]$ \\
\midrule
\multirow{8}{*}{CT}
  & learning rate                    & $\log\mathcal{U}[10^{-4},\,10^{-2}]$ \\
  & number of layers                 & $\{1, 2, 3\}$ \\
  & attention heads                  & $\{1, 2, 3\}$ \\
  & batch size                       & $\{32, 64, 128, 256\}$ \\
  & sequence hidden units            & $\{0.5, 1, 2, 3, 4\}\times$ input dim \\
  & balanced representation size     & $\{0.5, 1, 2, 3, 4\}\times$ input dim \\
  & FC hidden units                  & $\{0.5, 1, 2, 3, 4\}\times$ input dim \\
  & dropout                          & $\mathcal{U}[0,\,0.5]$ \\
\midrule
\multirow{6}{*}{RMSN}
  & learning rate                    & $\log\mathcal{U}[10^{-4},\,10^{-2}]$ \\
  & number of layers                 & $\{1, 2, 3\}$ \\
  & batch size                       & $\{32, 64, 128, 256\}$ \\
  & sequence hidden units            & $\{0.5, 1, 2, 3, 4\}\times$ input dim \\
  & dropout                          & $\mathcal{U}[0,\,0.5]$ \\
  & max.\ gradient norm              & $\{0.5, 1, 2\}$ \\
\midrule
\multirow{7}{*}{D3-Net, DeepLTMLE}
  & learning rate                    & $\log\mathcal{U}[10^{-4},\,10^{-2}]$ \\
  & hidden size                      & $\{16, 32, 64\}$ \\
  & number of layers                 & $\{1, 2, 3\}$ \\
  & attention heads                  & $\{2, 4\}$ \\
  & embedding dimension              & $\{8, 16, 32, 64, 128\}$ \\
  & dropout                          & $\mathcal{U}[0,\,0.5]$ \\
  & propensity loss weight $\alpha$  & $\mathcal{U}[0.5,\,2]$ \\
\midrule
\multirow{7}{*}{DeepACE}
  & learning rate                    & $\log\mathcal{U}[10^{-4},\,10^{-2}]$ \\
  & LSTM hidden size                 & $\{16, 32, 64\}$ \\
  & body hidden size                 & $\{16, 32\}$ \\
  & head hidden size                 & $\{8, 16, 32\}$ \\
  & dropout                          & $\mathcal{U}[0,\,0.5]$ \\
  & propensity loss weight $\alpha$  & $\mathcal{U}[0.5,\,1]$ \\
  & targeting loss weight $\beta$    & $\mathcal{U}[0.5,\,1]$ \\
\midrule
\multirow{8}{*}{IGC-Net}
  & learning rate                    & $\log\mathcal{U}[10^{-4},\,10^{-2}]$ \\
  & number of layers                 & $\{1, 2, 3\}$ \\
  & attention heads                  & $\{1, 2\}$ \\
  & sequence hidden units            & $\{16, 32, 48\}$ \\
  & hidden representation size       & $\{8, 16, 32\}$ \\
  & FC hidden units                  & $\{8, 16, 32\}$ \\
  & dropout                          & $\mathcal{U}[0,\,0.5]$ \\
  & batch size                       & $\{32, 64, 128, 256\}$ \\
\bottomrule
\end{tabular}
\end{table}



\subsection{Runtime measurement}\label{appx:runtime_measurement}
We additionally compare wall-clock deployment cost, using a dedicated single-GPU timing benchmark that reuses each model's already-tuned HPs (so HP tuning time itself is excluded). \methodname{}'s cost is inference-only: forward passes of the single fixed checkpoint over a task's query trajectories, with no gradient update. We repeated five forward passes and report the mean. 
Baselines' cost is training plus inference: each train/inference call is
likewise repeated 5 times and averaged. 

The architectures of CRN, TE-CDE, CT, RMSN, G-Net are not horizon-specific, so they require one training and 5 inferences, one for each target horizon, matching how they are evaluated. Their runtime is divided over five to represent the average runtime for each target horizon. IGC-Net, D3, D-LTMLE, and DeepACE require separate horizon-specific training for each setting, so they are timed
separately at each of the five horizons, and we report the mean of training + inference time across horizons.

\subsection{Real-world MIMIC prediction}
\label{appx:mimic_real}
We follow the cohort construction and data extraction procedures of \citet{hess2026igc}, to which we refer readers for additional details, and define a factual outcome prediction task as follows.

\paragraph{Cohorts and Data}
We extract each patient's hourly trajectory over the first 11 hours. Missing values are imputed by forward- and backward-filling within each trajectory, and patients with any remaining missing outcome or covariate values are excluded. We then sample ten cohorts of $n=3000$ patients each, without replacement within each cohort. Each cohort is split by row order into 1800 training, 600 validation, and 600 test patients. All cohorts and splits are generated once and shared across methods.

We consider two treatments: vasopressor use and mechanical ventilation. The outcome is hourly mean diastolic blood pressure (DBP, mmHg). Time-varying covariates comprise DBP and the 25 vital signs and laboratory measurements selected by \citet{hess2026igc} ($d_L=26$). Static covariates comprise gender, ethnicity, and age.

\paragraph{Prediction task.}
We evaluate multi-step prediction of factual outcomes under the behavior policy. We fix the start hour at $s=5$ and evaluate horizons $\tau\in \{2,3,4,5\}$, corresponding to observed DBP at endpoint hours $\{7,8,9,10\}$. For each cohort and horizon, we report RMSE over the 600 test patients, normalized by the standard deviation of their observed endpoint DBP.

\paragraph{Model Fitting}
IGC-Net is the only baseline that supports both multivariate treatments and dynamic policies, so we only compare with it. We use its architecture as released, 
but replace each patient's target treatment sequence with their own factual treatment sequence when constructing the iterative G-computation targets. This adapts their algorithm to target outcomes under the behavior policy without explicitly estimating that policy. For each cohort and target horizon, we tune its HP for 2h using the training and validation sets, refit it on their union using the selected HPs, and evaluate it on the test set.

For \methodname{}, the training and validation patients jointly form the context set, and the test patients form the query set. We use the context patients' factual treatment sequences as policy demonstrations, providing observed history--action samples from the behavior policy. Neither method explicitly fits a behavior-policy model, avoiding an additional source of estimation error in the comparison.


\subsection{Uncertainty calibration.}\label{appx:uncertainty_calibration}
We measure the calibration of \methodname{} and the strongest baseline, IGC-Net. \methodname{} outputs a full predictive distribution in a single forward pass, from which the equal-tailed $100(1-\alpha)\%$ interval is read off directly. IGC-Net produces only a point estimate, so we follow the paper and obtain intervals by Monte-Carlo dropout (1000 stochastic forward passes at the tuned dropout rate)\footnote{While these MC dropout intervals are the uncertainty-quantification procedure proposed by the original paper, they should not be interpreted as confidence intervals. Bootstrapping could instead be used to construct confidence intervals, but would require resampling the observed data and retraining IGC-Net on every resample, which is computationally prohibitive given its long training time.}. A query trajectory is \emph{covered} if its true CAPO lies inside the interval, and the empirical coverage of a task is the fraction of covered query trajectories. 
Coverage is measured on the datasets from our own prior, \texttt{mimic\_y1}, and \texttt{mimic\_1t}, across horizon $\tau \in \{2,4,6,8,10\}$. Results on each (dataset, horizon) cell is averaged over 20 tasks. 

Major results are reported in Figure~\ref{fig:calibration_curve}. We notice that some sampled tasks have few context trajectories following the target policy over the prediction window. This extreme positivity violation forces any method to extrapolate, and it becomes more frequent as the horizon grows. These cases account for the largest deviations from nominal coverage, in either direction (Figure~\ref{fig:per_task_overlap_by_h_prior}). 



\subsection{Unobserved Confounding}\label{appx:unobserved_confounding} 
We examine how \methodname{} and the two strongest baselines, CT and IGC-Net, degrade when 
unobserved confounders are present.
On \texttt{mimic\_1t}, we draw a patient-level latent severity $U \sim \mathcal{N}(0,1)$ that is hidden from all methods and enters the data-generating process twice by shifting the patient's outcome by $\lambda U$ at every step, 
\begin{equation}
    Y_t^j
    =
    \widetilde{Y}_t^j
    +
    \sum_{i=t-\omega^l}^{t}
    \frac{
        \min_{l=1,\ldots,d_a}
        \mathbf{1}_{\{A_i^l=1\}}\,p_i^l\,\beta^{l,j}
    }{
        (\omega^l-i)^2
    } + \lambda U.\label{eq:mimic_1t_outcome_unobserved_confounder}
\end{equation}
and the logit of the behavior propensity by $\gamma U$ at every step,
\begin{equation}
    A_t^l \sim \operatorname{Bernoulli}\left(\sigma(
        \gamma_Y^l Y_{t-1}^{A,l}
        + \gamma_X^l f_Y^l(X_t)
        + b^l + \gamma U).
    \right),\label{eq:mimic_1t_propensity_unobserved_confounder}
\end{equation}
We fix $\lambda = 0.5$ and vary the confounding strength over $\gamma \in \{1, 2, 4\}$, using the original evaluation without unobserved confounding ($\gamma = \lambda = 0$) as the reference. 

\begin{figure}[t]
    \centering



    \begin{subfigure}{0.95\linewidth}
        \centering
        \includegraphics[width=\linewidth]{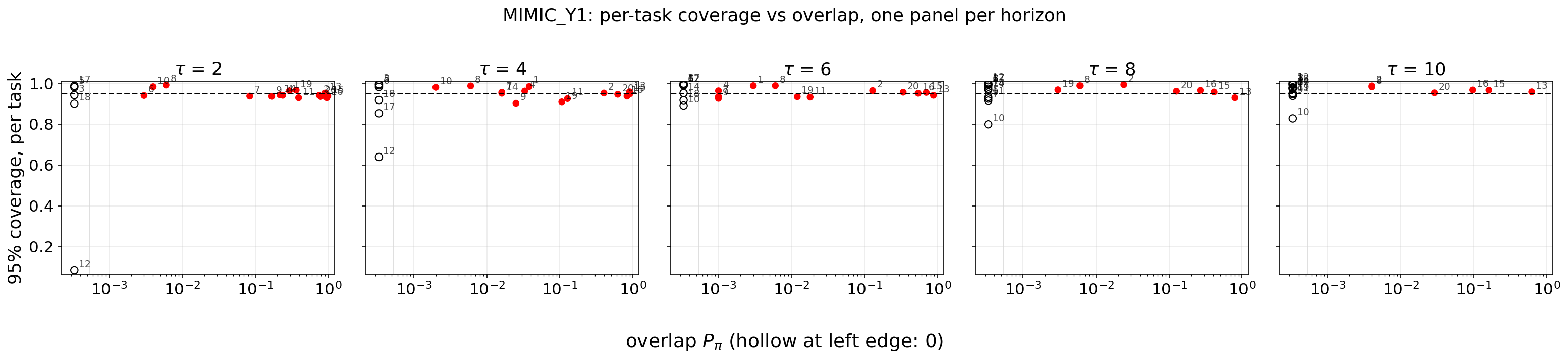}
        \caption{\texttt{mimic\_y1}}
        \label{subfig:per_task_overlap_by_h_y1}
    \end{subfigure}


    \begin{subfigure}{0.95\linewidth}
        \centering
        \includegraphics[width=\linewidth]{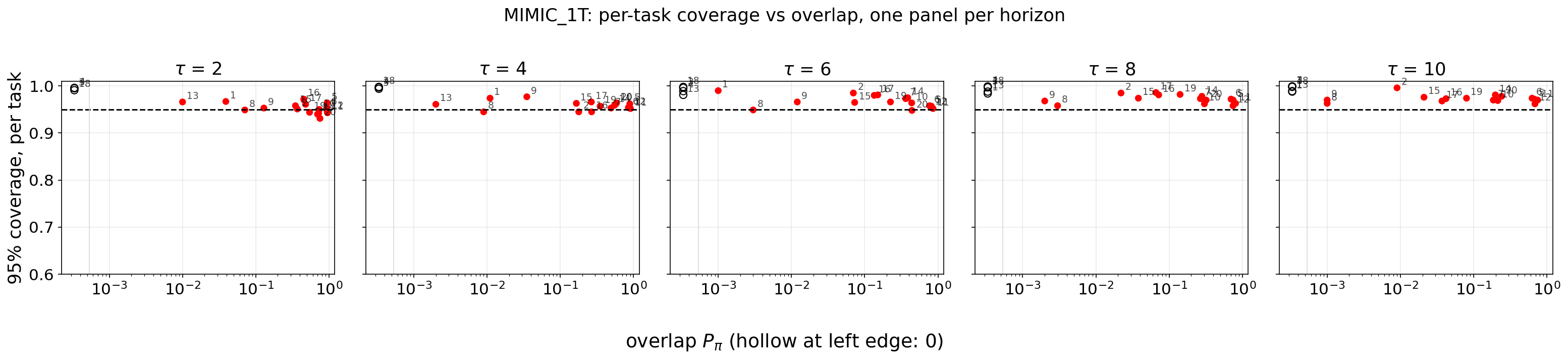}
        \caption{\texttt{mimic\_1t}}
        \label{subfig:per_task_overlap_by_h_prior_1t}
    \end{subfigure}

    \caption{\textbf{Coverage against overlap} Overlap (x-axis) is measured as the proportion of observed trajectories that exactly match the target policy, and coverage (y-axis) is evaluated using nominal 95\% intervals. Hollow points denote tasks with zero empirical overlap. Tasks with positive overlap generally achieve coverage above 0.9, whereas zero-overlap tasks exhibit the largest deviation from nominal coverage, in either direction.}
    \label{fig:per_task_overlap_by_h_prior}
\end{figure}

\begin{table}[t]
\centering
\caption{\textbf{Context sample size under a static and a history-dependent target policy.} RMSE on the query split (size=1000), mean$\pm$SD over 20 tasks ($\downarrow$ better).CT cannot handle dynamic policies and is not run on the history-dependent setting.}
\label{tab:ablation_sample_size}
\resizebox{\linewidth}{!}{
\begin{tabular}{@{}llccccccc@{}}
\toprule
 & & \multicolumn{4}{c}{\texttt{mimic\_extract\_1t}} & \multicolumn{3}{c}{\texttt{mimic\_extract\_1t\_hdp1}} \\
\cmidrule(lr){3-6} \cmidrule(lr){7-9}
Horizon & $n$ context & \textbf{PICLE-PFN} & CT & IGC-Net & D3 & \textbf{PICLE-PFN} & IGC-Net & D3 \\
\midrule
\multirow{3}{*}{$h=2$} & 100 & 0.456$\pm$0.145 & 0.631$\pm$0.195 & \textbf{0.447$\pm$0.079} & 0.596$\pm$0.301 & \textbf{0.403$\pm$0.132} & 0.406$\pm$0.108 & 0.486$\pm$0.131 \\
 & 300 & \textbf{0.382$\pm$0.091} & 0.385$\pm$0.082 & 0.387$\pm$0.077 & 0.498$\pm$0.124 & \textbf{0.347$\pm$0.107} & 0.355$\pm$0.104 & 0.399$\pm$0.111 \\
 & 1000 & 0.377$\pm$0.125 & \textbf{0.374$\pm$0.198} & 0.435$\pm$0.309 & 0.409$\pm$0.086 & \textbf{0.324$\pm$0.100} & 0.337$\pm$0.096 & 0.355$\pm$0.101 \\
\midrule
\multirow{3}{*}{$h=4$} & 100 & \textbf{0.621$\pm$0.204} & 0.838$\pm$0.210 & 0.698$\pm$0.178 & 0.745$\pm$0.217 & \textbf{0.574$\pm$0.181} & 0.661$\pm$0.148 & 0.687$\pm$0.159 \\
 & 300 & \textbf{0.537$\pm$0.133} & 0.578$\pm$0.142 & 0.645$\pm$0.202 & 0.596$\pm$0.133 & \textbf{0.500$\pm$0.151} & 0.539$\pm$0.148 & 0.575$\pm$0.161 \\
 & 1000 & 0.541$\pm$0.217 & 0.574$\pm$0.368 & \textbf{0.524$\pm$0.131} & 0.545$\pm$0.101 & \textbf{0.465$\pm$0.140} & 0.560$\pm$0.133 & 0.528$\pm$0.155 \\
\bottomrule
\end{tabular}}
\end{table}


\section{Comparison with CausalLongPFN}
\label{appx:causallongpfn}

Concurrent work, CausalLongPFN~\citep{zare2026causal}, also uses a PFN to estimate CAPO. However, their method only supports static treatment sequences, whereas \methodname{} supports both static and dynamic treatment policies. More importantly, their approach, in general, do not recover the multi-step CAPO.

For a prediction horizon greater than one, CausalLongPFN recursively applies a one-step outcome predictor. At each step from the start to the end time step, it inserts the predicted outcome mean into the next outcome history while leaving future time-varying covariates masked. This plug-in rollout does not, in general, identify the multi-step CAPO. To see this, consider a simplified setting where the only time-varying confounder is the intermediate outcome itself, yielding the trajectory $O=(B,A_1,Y_2,A_2,Y_3)$.  Support a target treatment sequence is $(a_1,a_2)$. At the second step, CausalLongPFN's one-step predictor models the conditional distribution of $Y_3$ given $(B,A_1,Y_2,A_2)$. Suppose the one-step predictor perfectly recovers the true
conditional mean:
$$
g(b,a_1,y_2,a_2)
=
\mathbb E[Y_3\mid B=b,A_1=a_1,Y_2=y_2,A_2=a_2].
$$

The target CAPO, $\mathbb E[Y_3^{a_1,a_2}\mid B=b]$, satisfies
$$
\mathbb E[Y_3^{a_1,a_2}\mid B=b]
=
\mathbb E[g(b,a_1,Y_2,a_2)\mid B=b,A_1=a_1].
$$

The critical flaw in their one-step mean rollout approach is that, it inserts the first-step predicted mean
$\widehat y_2=\mathbb E[Y_2\mid B=b,A_1=a_1]$
and returns $g(b,a_1,\widehat y_2,a_2)$. In general,
$$
\mathbb E[g(b,a_1,Y_2,a_2)\mid B=b,A_1=a_1]
\neq
g(b,a_1,\mathbb E[Y_2\mid B=b,A_1=a_1],a_2).
$$
Thus, even with exact one-step conditional means, mean
plug-in rollout does not generally recover the multi-step CAPO.

Additionally, the released CausalLongPFN checkpoint supports at most 10 time-varying covariates and a maximum prediction horizon of five. Our MIMIC benchmark contains 27 time-varying covariates and is therefore incompatible with its input interface. Consequently, among our benchmarks, we can evaluate CausalLongPFN only on \texttt{cancer} and restrict the comparison to $\tau_{\max}=5$. This benchmark is less sensitive to the flaw in their one-step mean rollout approach, because it only has one time-varying confounder, which is the outcome itself in the intermediate step. Table~\ref{tab:compare_causallongpfn} shows that CausalLongPFN achieves lower RMSE at $\tau\in\{1,2\}$, whereas \methodname{} achieves lower RMSE at longer horizons. This pattern is consistent with the limitation of mean plug-in rollout discussed above.

CausalLongPFN also requires substantially longer inference time at every evaluated horizon. Two things of its method contribute to this cost. First, multi-step prediction requires multiple forward pass because CausalLongPFN needs sequential one-step model evaluations, whereas \methodname{} estimates all supported horizons in a single forward pass. Second, its context tokens attend to the query token, making their representations query-dependent. Consequently, the full context set is re-encoded for each query trajectory at every rollout step, rather than being encoded once and reused across queries.

\begin{table}[t]
\centering
\caption{Our PFN vs.\ CausalLongPFN on \texttt{cancer}: normalized RMSE (mean $\pm$ std over 20 tasks) and inference runtime per task (1000 context / 1000 query trajectories, one A100).}
\label{tab:compare_causallongpfn}
\begin{tabular}{c cc cc}
\toprule
 & \multicolumn{2}{c}{Normalized RMSE $\downarrow$} & \multicolumn{2}{c}{Runtime (s) $\downarrow$} \\
\cmidrule(lr){2-3} \cmidrule(lr){4-5}
Horizon $\tau$ & \methodname{} & CausalLongPFN & \methodname{} & CausalLongPFN \\
\midrule
1 & 0.275 $\pm$ 0.026 & \textbf{0.195} $\pm$ 0.020 & \textbf{0.039} & 6.67 \\
2 & 0.308 $\pm$ 0.030 & \textbf{0.278} $\pm$ 0.025 & \textbf{0.051} & 12.49 \\
3 & \textbf{0.298} $\pm$ 0.037 & 0.306 $\pm$ 0.026 & \textbf{0.065} & 18.80 \\
4 & \textbf{0.264} $\pm$ 0.038 & 0.298 $\pm$ 0.030 & \textbf{0.085} & 24.75 \\
5 & \textbf{0.216} $\pm$ 0.032 & 0.278 $\pm$ 0.035 & \textbf{0.100} & 31.18 \\
\bottomrule
\end{tabular}
\end{table}

\section{Complexity}\label{appx:complexity} 
Let $N$ be the number of trajectories (context and query,
processed jointly), $T$ the maximum length, $\tau_{\max}\le T$ the maximum prediction
horizon, and $d=\max\{d_L,d_S\}>d_A$ the covariate dimension. Feature pooling
self-attends among the covariates of each time step, costing $O(NTd^2)$. In the
observed stream, causal attention and row attention cost $O(NT^2)$ and
$O(TN^2)$. Each trajectory carries
$K=T\tau_{\max}-\tau_{\max}(\tau_{\max}-1)/2=\Theta(T\tau_{\max})$ readout tokens, one per
(start, endpoint) window; their cross-attention to the observed stream costs $O(NKT)$
and their row attention $O(KN^2)$. In total,
\[
O\!\left(NTd^2 + NT^2 + TN^2 + NT^2\tau_{\max} + T\tau_{\max}N^2\right)
= O\!\left(NTd^2 + T\tau_{\max}N^2\right) \quad\text{for } T\ll N,
\]
which is quadratic in the number of trajectories, linear in the sequence length for a
fixed horizon cap, and at most $O(NTd^2+T^2N^2)$ for the full triangle $\tau_{\max}=T$.
Since row attention uses only the $N_c$ context trajectories as keys, its exact cost
is $O(T\tau_{\max}NN_c)$, i.e.\ linear in the number of queries for a fixed context.

\end{document}